\def\eqref#1{equation~\ref{#1}}
\def\1{\bm{1}}
\def\rb{{\textnormal{b}}}
\def\rr{{\textnormal{r}}}
\def\rs{{\textnormal{s}}}
\def\rt{{\textnormal{t}}}
\def\rw{{\textnormal{w}}}
\def\rx{{\textnormal{x}}}
\def\rvb{{\mathbf{b}}}
\def\rvw{{\mathbf{w}}}
\def\rvx{{\mathbf{x}}}
\def\rmW{{\mathbf{W}}}
\def\mV{{\bm{V}}}
\def\mW{{\bm{W}}}
\DeclareMathAlphabet{\mathsfit}{\encodingdefault}{\sfdefault}{m}{sl}
\SetMathAlphabet{\mathsfit}{bold}{\encodingdefault}{\sfdefault}{bx}{n}
\def\sC{{\mathbb{C}}}
\def\sP{{\mathbb{P}}}
\def\sS{{\mathbb{S}}}
\def\Pr{p}
\title{An Exact Poly-Time Membership-Queries Algorithm for Extracting a Three-Layer ReLU Network}
\author{%
  Amit Daniely \\
  School of Computer Science and Engineering, The Hebrew University \\
  and Google Research Tel-Aviv \\
  \texttt{amit.daniely@mail.huji.ac.il} \\
  \And
  Elad Granot \\
  School of Computer Science and Engineering, The Hebrew University \\
  \texttt{elad.granot@mail.huji.ac.il} \\
}
\newif\ifdonotappend
\newif\ifdonotjoin
\begin{document}

\maketitle

\begin{abstract}
We consider the natural problem of learning a ReLU network from queries, which was recently remotivated by model extraction attacks. In this work, we present a polynomial-time algorithm that can learn a depth-two ReLU network from queries under mild general position assumptions. We also present a polynomial-time algorithm that, under mild general position assumptions, can learn a rich class of depth-three ReLU networks from queries. For instance, it can learn most networks where the number of first layer neurons is smaller than the dimension and the number of second layer neurons.
  
These two results substantially improve state-of-the-art: Until our work, polynomial-time algorithms were only shown to learn from queries depth-two networks under the assumption that either the underlying distribution is Gaussian (\citet{Chen2021}) or that the weights matrix rows are linearly independent (\citet{Milli2018}). For depth three or more, there were no known poly-time results.
\end{abstract}

\section{Introduction}

With the growth of neural-network-based applications, many commercial companies offer machine learning services, allowing public use of trained networks as a black-box.
Those networks allow the user to query the model and, in some cases, return the exact output of the network to allow the users to reason about the model's output.
Yet, the parameters of the model and its architecture are considered the companies' intellectual property, and they do not often wish to reveal it.
Moreover, sometimes the training phase uses sensitive data, and as demonstrated in \citet{Zhang2019}, inversion attacks can expose those sensitive data to one who has the trained model.

Nevertheless, the model is still vulnerable to membership query attacks even as a black box.
A recent line of works
(\citet{Tramer2016},
\citet{Shi2017},
\citet{Milli2018},
\citet{Rolnick},
\citet{Carlini2020},
\citet{Fornasier2019a})
showed either empirically or theoretically that using a specific set of queries, one can reconstruct some hidden models.
Theoretical work includes  \citet{Chen2021} that proposed a novel algorithm that, under the Gaussian distribution, can approximate a two-layer model with ReLU activation in a guaranteed polynomial time and query complexity without any further assumptions on the parameters.
Likewise, \citet{Milli2018} has shown how to exactly extract the parameters of depth-two networks, assuming that the weight matrix has independent rows (in particular, the number of neurons is at most the input dimension).
Our work extends their work by showing:
\begin{enumerate}
	\item A polynomial time and query complexity algorithm for \textbf{exact reconstruction} of a two-layer neural network with any number of hidden neurons, under mild general position assumptions; and
	\item A polynomial time and a query complexity algorithm for exact reconstruction of a \textbf{three-layer neural network} under mild general position assumptions, with the additional assumptions that the number of first layer neurons is smaller than the input dimension and the assumption that the second layer has non-zero partial derivatives. The last assumption is valid for most networks with more second layer neurons than first layer neurons.
\end{enumerate}

The mild general position assumptions are further explained in section \ref{sec:gen_pos}. However, we note that the proposed algorithm will work on any two-layer neural network except for a set with a zero Lebesgue measure. Furthermore, it will work in polynomial time provided that the input weights are slightly perturbed (for instance, each weight is perturbed by adding a uniform number in $[-2^{-d},2^{-d}]$)
At a very high level, the basis of our approach is to find points in which the linearity of the network breaks and extract neurons by recovering the affine transformations computed by the network near these points. This approach was taken by the previous theoretical papers \citet{Milli2018, Chen2021} and also in the empirical works of \citet{Carlini2020, Jagielski2019}. In order to derive our results, we add several ideas to the existing techniques, including the ability to distinguish first from second layer neurons, which allows us to deal with three-layer networks, as well as the ability to reconstruct the neurons correctly in general depth-two networks with any finite width in a polynomial time, without assuming that the rows are independent.

\section{Results}\label{results}

We next describe our results. Our results will assume a general position assumption quantified by a parameter $\delta\in(0,1)$, and a network that satisfies our assumption with parameter $\delta$ will be called {\em $\delta$-regular}.
This assumption is defined in section \ref{sec:gen_pos}.
We note, however, that a slight perturbation of the network weights, say, adding to each weight a uniform number in $[-2^{-d},2^{-d}]$, guarantees that w.p. $1-2^{-d}$ the network will be $\delta$-regular with $\delta$ that is large enough to guarantee polynomial time complexity.
Thus, $\delta$-regularity is argued to be a mild general position assumption. 
Throughout the paper, we denote by $Q$ the time it takes to make a single query.

\subsection{Depth Two Networks}

Consider a $2$-layer network model given by
\begin{equation}\label{eq:1}
\N(\x) =\sum_{j=1}^{d_{1}}u_j\phi \bra{\brp{\w_{j}, \x} + b_j}
\end{equation}
where  $\phi (x) = x^+ = \max(x,0)$ is the ReLU function, and for any $j\in [d_{1}]$, $\w_j\in\R^{d} $, $b_j\in\R$, and $u_j\in\R$. We assume that the $\w_j$'s, the $b_j$'s and the $u_j$'s, along with the width $d_1$, are unknown to the user, which has only black box access to $\N(\x)$, for any $\x\in\R^d$. 
We do not make any further assumptions on the network weights, rather than $\delta$-regularity.

\begin{theorem}\label{thm:depth_two}
	There is an algorithm that given an oracle access to a $\delta$-regular network as in \eqref{eq:1}, reconstructs it using $O\left(\left(d_1\log(1/\delta) + d_1d\right)Q + d^2d_1\right)$ time and $O\left(d_1\log(1/\delta) + d_1d\right)$ queries.
\end{theorem}
We note that by reconstruction we mean that the algorithm will find $d'_1$ and weights 
$\w'_0,\ldots,\w'_{d'_{1}}\in\R^{d}$, $b'_0,\ldots,b'_{d'_{1}}\in\R$, and
$u'_1,\ldots,u'_{d'_{1}}\in\R$ such that
\begin{equation}\label{eq:2}
\forall\x\in\R^d,\;\;\N(\x) =\inner{\w'_{0}, \x} + b'_0 + \sum_{j=1}^{d'_{1}}u'_j\phi \left(\inner{\w'_{j}, \x} + b'_j\right).
\end{equation}

We will also prove a similar result for the case that the algorithm is allowed to query the network just on points in $\R_+^d$, but on the other hand, equation \eqref{eq:2} needs to be satisfied just for $\x\in\R_{+}^d$.
This case is essential for reconstructing depth-three networks, and we will call it the $\R_+^d$-restricted case.

\begin{theorem}\label{thm:depth_two_restricted}
	In the $\R_+^d$-restricted case there is an algorithm that given an oracle access to a $\delta$-regular network as in \eqref{eq:1}, reconstructs it using $O\left(\left(dd_1\log(1/\delta) + d_1d\right)Q + d^2d^2_1\right)$ time and $O\left(dd_1\log(1/\delta) + d_1d\right)$ queries.
\end{theorem}

\subsection{Depth Three Networks}

Consider a $3$-layer network given by
\begin{equation}\label{eq:3}
\N(\x) = \inner{\bu,  \phi(\mV \phi (\mW \x + \bb) + \bc)}
\end{equation}
where $\mW\in\R^{d_1\times d}$, $\bb\in\R^{d_1}$, $\mV\in\R^{d_2\times d_1}$,  $\bc\in\R^{d_2}$, $\bu\in \R^{d_2}$ and $\phi$ is the ReLU function defined element-wise.
We assume $\mW,\mV,\bu,\bb,\bc$, along with $d_1$ and $d_2$, are unknown to the user, which have only black box access to $\N(x)$ for any $\x\in\R^d$.
Besides $\delta$-regularity we will assume that (i) $d_1 \le d$ and that (ii) the top layer has non-zero partial derivatives:
For the second layer function $F:\R^{d_1}\to\R$ given by $F(\x) =\inner{\bu, \phi(V \x  + \bc)}$ we assume that for any $\x\in \R_+^{d_1}$ and $j\in [d_1]$, the derivative of $F$ in the direction of $\ebase{j}$ and $-\ebase{j}$ is not zero. We note that if $d_2$ is large compared to $d_1$ ($d_2\ge 3.5d_1$ would be enough) this assumption is valid for most choices of $\bu, \mV$ and $\bc$ (see theorem \ref{thm:non_zer_derivative}).

\begin{theorem}\label{thm:depth_tree}
	There is an algorithm that given an oracle access to a $\delta$-regular network as in \eqref{eq:2},
	with $d_1\le d$ and top layer with non-zero partial derivatives,
	reconstruct it using 
	$\mathrm{poly}(d,d_1,d_2,\log(1/\delta))$ time and queries.
\end{theorem}

By reconstruction we mean that the algorithm will find $d'_1, d'_2 \in \Nat$, weights 
$\bv'_0,\ldots,\bv'_{d'_{2}}\in\R^{d'_1}$, $c'_0,\ldots,c'_{d'_{2}}\in\R$, 
$u'_1,\ldots,u'_{d'_{2}}\in\R$, as well as a matrix $\mW'\in \R^{d'_1\times d}$ and a vector $\bb'\in \R^{d'_1}$ such that
\begin{equation*}
\forall\x\in\R^d,\;\;\N(\x) =\inner{\bv'_{0}, \phi\left(\mW'\x + \bb'\right)} + c'_0 + \sum_{j=1}^{d'_{2}}u'_j\phi \left(\inner{\bv'_{j},\phi\left(\mW'\x + \bb'\right)} + c'_j\right).
\end{equation*}

\subsection{Novelty of the Reconstructions}

Having an {\em exact} reconstruction is an essential task for extracting a model. While approximate reconstructions, such as in \citet{Chen2021}, may mimic the output of the extracted network, they cannot reveal information on the architecture, like the network's width.
Moreover, an approximated reconstruction can be viewed as a regression task. For example, the work of \citet{Shi2017} used Naive Bayes and SVM models to predict the network's output.
An exact reconstruction requires building new tools, as we provide in this work.

Exploring the non-linearity parts of a network can offer information on the relations between the weights of a neuron up to a multiplicative factor.
Specifically, the sign of a neuron is missing. Indeed: for the $j$'th neuron both $(\w_j,b_j)$ and $(-\w_j,-b_j)$ have the property of breaking the linearity of $\N(\x)$ at the same values of $\x$.
To achieve the global signs of all the neurons, one requires either to restrict the width of the network (as in \citet{Milli2018}) or to use brute-force over all possible combinations (as in \citet{Carlini2020} and \citet{Rolnick}).
We bypass this challenge by allowing reconstruction up to an affine transformation and using the fact that for all $\x\in\R^d$,
\[
\brp{\w,\x} + b = \phi(\brp{\w,\x} + b) - \phi(-\brp{\w,\x} - b).
\]
This bypass allows the reconstruction of a network with any finite width in a polynomial time.

Another technical novelty of the paper is an algorithm that can identify whether a neuron belongs to the first or the second layer. This allows us to handle a second hidden layer after peeling the first layer.


\section{Proofs}

\subsection{Notations and Terminology}\label{sec:notations}

We denote by $\ebase{1},\ldots,\ebase{d}$ the standard basis of $\R^d$ and by $\B(\x,\delta)$ the open ball around $\x\in\R^d$ with radius $\delta>0$.
For $\w\in \R^d$ and $b\in\R$ we denote by $\Lambda_{\w,b}$ the affine function $\Lambda_{\w,b}(\x) = \inner{\w, \x} + b$.
For a point $\x\in\R^d$ and a set $A\subset\R^d$ we denote by $d(\x,A) = \inf_{\y\in A} \|\x-\y\|$ the distance between $\x$ and $A$.
Given a subspace $\sP$, A {\em Gaussian in $\sP$} is a Gaussian vector $\rvx$ in $\R^d$ whose density function is supported in $\sP$. We say that it is standard if the projection of $\rvx$ on any line in $\sP$ that passes through $\E[\rvx]$ has a variance of $1$.

The {\em state} of a neuron on a point $\x\in \R^d$ is the sign of the input of that neuron (either positive, negative, or zero).
The {\em state} of a network on a point $\x\in \R^d$ is a description of the states of all neurons at $\x$.
Similarly, the state of the first layer at $\x\in\R^d$ is a description of the state of all first layer neurons at $\x$.

The angle between a hyperplane $\sP$ with a normal vector $\n$ and a line $\{t\x + \y : t\in\R\}$ (or just a vector $\x\ne 0$) is defined as $\left|\inner{\frac{\x}{\|\x\|} , \n}\right|$. 
Likewise, the {\em distance} between two hyperplanes $\sP_1,\sP_2$ with normal vectors $\n_1,\n_2$ respectively, is given by $D(\sP_1,\sP_2):=\sqrt{1 - \inner{\n_1,\n_2}^2}$.
We say that a hyperplane is {\em $\delta$-general} if its angle with all the $d$ axes is at least $\delta$. A hyperplane is {\em general} if it is $\delta$-general for some $\delta>0$ (equivalently, it is not parallel to any axis).

\subsection{Piecewise Linear Functions}\label{sec:piece_lin_funcs}

Let $f:\R^d\to\R$ be piecewise linear, with finitely many pieces.
A {\em general point} is a point $\x\in\R^d$ such that exists a neighborhood around $\x$ for which $f$ is affine in that neighborhood.
Furthermore, we say that the point $\x\in\R^d$ is a {\em $\delta$-general point} if $f$ is affine in $\B(\x,\delta)$.
Complementarily, a {\em critical point} is a point $\x\in\R^d$ such that for every $\delta>0$, $f$ is not affine in $\B(\x,\delta)$.
A {\em critical hyperplane} is an affine hyperplane $\sP$, whose intersection with the set of critical points is of dimension $d-1$.
For a critical hyperplane $\sP$, we say that a point $\x\in \R^d$ is {\em $\sP$-critical} if it is critical and $\x\in \sP$.
Figure \ref{fig:critical_points} illustrates the above definitions for the one-dimensional input case.

\begin{figure}\label{fig:critical_points}
  \centering
  \includegraphics[scale=0.6]{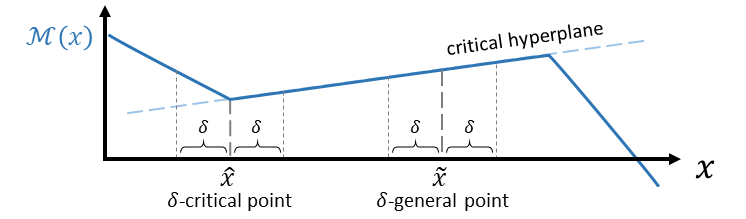}
  \caption{An illustration of one-dimensional piecewise linear function $\N:\R\rightarrow\R$}
\end{figure}

Note that there are finitely many critical hyperplanes for any piecewise linear function, that any critical point belongs to at least one critical hyperplane, and most\footnote{By most, we mean all except a set whose dimension is $d-2$.} critical points belong to exactly one critical hyperplane.
We will call such points {\em non-degenerate}.
Furthermore, we will say that a critical point $\x$ is {\em $\delta$-non-degenerate} if exactly one critical hyperplane intersects with $\B(\x,\delta)$.

For the function $\N$ computed by a network such as \eqref{eq:1} or \eqref{eq:3}, we note that for any $j\in [d_1]$, the hyperplane $\sP_j = \{\x : \inner{\w_j, \x} + b_j = 0\}$ is a critical hyperplane.  In this case, we say that $\sP$ corresponds to the $j$th neuron, and vice-verse. Also, if $\x$ is a critical point, then at least one of the neurons is in a critical state (i.e., its input is $0$). In this case, we will say that $\x$ is a critical point of that neuron.

We next describe a few simple algorithms related to piecewise linear functions that we will use frequently.
Their correctness is given in section \ref{appendix:alg_correctness} of the appendix; here we briefly sketch the idea behind it.

\subsubsection{Reconstruction of an affine function}\label{sec:affine_reconstruct}
    
    We note that if $\x$ is an $\epsilon$-general point of a function $f$, then one can reconstruct the affine function $f$ computes over $\B(\x,\epsilon)$ with $d+1$ queries in $\B(\x,\epsilon)$ and $O(dQ)$ time.
    Algorithm \ref{alg:affine_reconstruct} reconstructs the desired affine function.
	
\ifdonotappend
	\begin{algorithm}
	\caption{$\cmdttl{AFFINE}_\epsilon(f,\x)$: Affine function reconstruction from $\epsilon$-general point}
	\textbf{Input:} Black box access to a piecewise linear $f:\R^d\to\R$, parameter $\epsilon>0$, and an $\epsilon$-general point $\x\in \R^d$
	
	\textbf{Output:} Vector $\w\in\R^d$ and $b\in \R$ such that $\forall \y\in\B(\x,\epsilon),\;\Lambda_{\w,b}(\y) = f(\y)$
	
    \begin{algorithmic}[1]\label{alg:affine_reconstruct}
		\STATE Return $w_i = \frac{f(\x + \epsilon\ebase{i}) -f(\x) }{\epsilon}$ and $b=\left(f(\x) - \sum_{i=1}^d  \frac{f(\x + \epsilon\ebase{i}) -f(\x) }{\epsilon}x_i\right) $
    \end{algorithmic}
    \end{algorithm}
\fi


\subsubsection{Reconstruction of critical points in one dimension}\label{sec:one_dim_critical}

	We say that a piecewise linear one dimensional function $f:\R\to\R$ is {\em $\delta$-nice} if: (1) All its critical points are in $\left(-\frac{1}{\delta}, \frac{1}{\delta}\right)\setminus (-\delta,\delta)$, (2) each piece is of length at least $\delta$, (3) there are no two pieces that share the same affine function, and (4) all the points in the grid $\frac{2^{-\lceil\log_2(2/\delta^2) \rceil}}{\delta}\mathbb{Z}$ are $\delta^2$-general. 
    
    Given a $\delta$-nice function, algorithm \ref{alg:left_critical_reconstruct} recovers the left-most critical point in the range $(a,1/\delta)$, if such a point exist, using $O\bra{\log(1/\delta)Q}$ time.
    In short, the algorithm works similar to a binary search, where each iteration splits the current range into two halves and keeps the left half if and only if it is not affine.
    
    \begin{algorithm}
	\caption{$\cmdttl{FIND\_CP}(\delta, f, a)$: Single critical point reconstruction}
	\textbf{Input:} Parameter $\delta < 1$, black box access to a $\delta$-nice $f:\R\to\R$, and left limit $a\in \left(-\frac{1}{\delta}, \frac{1}{\delta}\right)$
	
	\textbf{Output:} The left most critical point of $f$ in $(a,1/\delta)$.
    \begin{algorithmic}[1]\label{alg:left_critical_reconstruct}
		\STATE Set $x_L= -\frac{1}{\delta}, x_R = \frac{1}{\delta}$
		\FOR{$j=1,\ldots, \lceil \log_2(2/\delta^2) \rceil + 1$}
		\STATE If $\frac{x_L+x_R}{2} \le a$ or $\cmdttl{AFFINE}_{\delta^2}\bra{f,x_L} = \cmdttl{AFFINE}_{\delta^2}\bra{f,\frac{x_L+x_R}{2}}$,
		set $x_L = \frac{x_L+x_R}{2}$. Else, set $x_R = \frac{x_L+x_R}{2}$.
		\ENDFOR
		\STATE Let $\Lambda_L=\cmdttl{AFFINE}_{\delta^2}\bra{f,x_L}$ and $\Lambda_R=\cmdttl{AFFINE}_{\delta^2}\bra{f,x_R}$. If $\Lambda_L = \Lambda_R$ then return "no critical points in $(a,1/\delta)$". Else, return the point $x$ for which $\Lambda_L(x)=\Lambda_R(x)$
	\end{algorithmic}
    \end{algorithm}

	With algorithm \ref{alg:left_critical_reconstruct} we can reconstruct all the critical points of $f$ in a given range $(a,b) \subset (-1/\delta , 1/\delta)$ in $O\left(k\log(1/\delta)Q\right)$ time, where $k$ is the number of critical points in $(a,b)$. Indeed, we can invoke algorithm \ref{alg:left_critical_reconstruct} to find the left-most critical point $x_1$ in $(a,b)$, then the one on its right and so on, until there are no more critical points in $(a,b)$.
	

\subsubsection{Reconstruction of a critical hyperplane}\label{sec:citical_plane_reconstruct}

Let $f:\R^d\to\R$ be a piecewise linear function. Assume that $\x$ is a $\delta$-non-degenerate $\sP$-critical point.
If $\x_1,\x_2 \in \B(\x,\delta)$ are two points on opposite sides of $\sP$, then $\sP$ is the null space of $\Lambda_1 - \Lambda_2$, where $\Lambda_1,\Lambda_2$ are the affine functions computed by $f$ near $\x_1$ and $\x_2$. Algorithm \ref{alg:citical_plane_reconstruct} therefore reconstructs $\sP$ in $O(dQ)$ time.

\ifdonotappend
    \begin{algorithm}
    	\caption{$\cmdttl{FIND\_HP}(f, \delta, \x)$: Reconstruction of a critical hyperplane}
    	\textbf{Input:} Black box access to a piecewise linear $f:\R^d\to\R$, a parameter $\delta>0$, a $\delta$-non-degenerate $\sP$-critical point $\x\in \R^d$ for $\delta$-general $\sP$
    	
    	\textbf{Output:}  $\w\in\R^d$ and $b\in \R$ such that $\sP = \{\x : \Lambda_{\w,b}(\x) = 0\}$
        \begin{algorithmic}[1]\label{alg:citical_plane_reconstruct}
    		\STATE Using algorithm \ref{alg:affine_reconstruct} Set  $\epsilon = (\delta/2)^2$, and obtain
    		$(\w_1,b_1)=\cmdttl{AFFINE}_\epsilon\bra{f,\x + \frac{\delta}{2}\ebase{1}}$
    		and
    		$(\w_2,b_2)=\cmdttl{AFFINE}_\epsilon\bra{f,\x - \frac{\delta}{2}\ebase{1}}$
    		\STATE Return $\w = \w_1 - \w_2$ and $b = b_1 - b_2$
    \end{algorithmic}
    \end{algorithm}
\fi

\subsubsection{Checking convexity/concavity in a \texorpdfstring{$\delta$}{delta}-non-degenerate critical point}\label{sec:check_conv}

Let $f:\R^d\to\R$ be a piecewise linear function. Assume that $\x$ is a $\delta$-non-degenerate $\sP$-critical point.
As $\sP$ is the intersection of exactly two affine functions, then $f$ is necessarily convex or concave in $\B(\x,\delta)$. Furthermore, for any unit vector $\e$ that is not parallel\footnote{By parallel we mean that the vector is orthogonal to the hyperplane’s normal.} to $\sP$, we have that $f$ is convex in $\B(\x,\delta)$ if and only if it is convex in $[\x - \delta\e,\x+\delta\e]$, in which case the slope of $t\mapsto f(\x + t\e)$ in $[-\delta,0]$ is strictly smaller then its slope in $[0,\delta]$.
Algorithm \ref{alg:check_conv} therefore determine if $f$ is convex or concave in $\B(\x,\delta)$ in $O(Q)$ time.

\ifdonotappend
    \begin{algorithm}
    	\caption{$\cmdttl{IS\_CONVEX}(f, \delta, \x)$: Checking convexity/concavity}
    	\textbf{Input:} Black box access to a piecewise linear $f:\R^d\to\R$, a parameter $\delta>0$, a $\delta$-non-degenerate $\sP$-critical point $\x\in \R^d$ for general $\sP$
    	
    	\textbf{Output:} Is $\x$ convex in $f$ at $\B(\x,\delta)$
    	
        \begin{algorithmic}[1]\label{alg:check_conv}
    		\IF{$f(\x + \delta\ebase{1}) - f(\x)> f(\x) - f(\x- \delta\ebase{1})$}
    		\STATE Return ``convex``
    		\ELSE
    		\STATE Return ``concave``
    		\ENDIF
    \end{algorithmic}
    \end{algorithm}
\fi

\subsubsection{Distinguish \texorpdfstring{$\epsilon$}{eps}-general point from \texorpdfstring{$\epsilon$}{eps}-non-degenerate critical point}\label{sec:general_or_critical}

Let $f:\R^d\to\R$ be a piecewise linear function. Assume that $\x$ is either a $\epsilon$-non-degenerate $\sP$-critical point or an $\epsilon$-general point.
Then by the definitions, for any unit vector $\e$ that is not parallel to $\sP$, $\x$ is critical if and only if the slope of $t\mapsto f(\x + t\e)$ is different in the segments $[-\epsilon,0]$ and $[0,\epsilon]$.
Algorithm \ref{alg:general_or_critical} therefore determine if $\x$ is critical in $O(Q)$ time.

\ifdonotappend
    \begin{algorithm}
    	\caption{$\cmdttl{IS\_GENERAL}(f,\epsilon,\x)$: Distinguish general point from critical point}
    	\textbf{Input:} Black box access to a piecewise linear $f:\R^d\to\R$, a parameter $\epsilon>0$, a point $\x$ that is either $\epsilon$-general or $\epsilon$-non-degenerate $\sP$-critical point for general $\sP$
    	
    	\textbf{Output:}  Is $\x$ general?
        \begin{algorithmic}[1]\label{alg:general_or_critical}
    		\IF{$f(\x + \epsilon\ebase{1}) - f(\x) = f(\x) - f(\x- \epsilon\ebase{1})$}
    		\STATE Return "general", else return "critical"
    		\ELSE
    		\STATE Return "critical"
    		\ENDIF
    \end{algorithmic}
    \end{algorithm}
\fi


\ifdonotappend
\else
\begin{figure*}
\noindent \begin{minipage}[t]{0.49\textwidth}

    \setlength{\intextsep}{0pt}
    \begin{algorithm}[H]
    	\caption{$\cmdttl{AFFINE}_\epsilon(f,\x)$ -- \\
    	Affine map reconstruction from $\epsilon$-general point}
    	\textbf{Input:} Black box access to a piecewise linear $f:\R^d\to\R$, parameter $\epsilon>0$, and an $\epsilon$-general point $\x\in \R^d$
    	
    	\textbf{Output:} Vector $\w\in\R^d$ and $b\in \R$ such that $\forall \y\in\B(\x,\epsilon),\;\Lambda_{\w,b}(\y) = f(\y)$
    	
    \begin{algorithmic}[1]\label{alg:affine_reconstruct}
		\STATE Return $w_i = \frac{f(\x + \epsilon\ebase{i}) -f(\x) }{\epsilon}$ and \\ $b=\left(f(\x) - \sum_{i=1}^d  \frac{f(\x + \epsilon\ebase{i}) -f(\x) }{\epsilon}x_i\right) $
    \end{algorithmic}
    \end{algorithm}

    \setlength{\intextsep}{0pt}
    \begin{algorithm}[H]
    	\caption{$\cmdttl{FIND\_HP}(f, \delta, \x)$ -- \\ Reconstruction of a critical hyperplane}
    	\textbf{Input:} Black box access to a piecewise linear $f:\R^d\to\R$, a parameter $\delta>0$, a $\delta$-non-degenerate $\sP$-critical point $\x\in \R^d$ for $\delta$-general $\sP$
    	
    	\textbf{Output:}  $\w\in\R^d$ and $b\in \R$ such that $\sP = \{\x : \Lambda_{\w,b}(\x) = 0\}$
        \begin{algorithmic}[1]\label{alg:citical_plane_reconstruct}
    		\STATE Set  $\epsilon = (\delta/2)^2$
    		\STATE Using algorithm \ref{alg:affine_reconstruct} obtain
    		$(\w_1,b_1)=\cmdttl{AFFINE}_\epsilon\bra{f,\x + \frac{\delta}{2}\ebase{1}}$
    		and
    		$(\w_2,b_2)=\cmdttl{AFFINE}_\epsilon\bra{f,\x - \frac{\delta}{2}\ebase{1}}$
    		\STATE Return $\w = \w_1 - \w_2$ and $b = b_1 - b_2$
    \end{algorithmic}
    \end{algorithm}

\end{minipage}
\hspace{0.02\textwidth}
\begin{minipage}[t]{0.49\textwidth}

    \setlength{\intextsep}{0pt}
    \begin{algorithm}[H]
    	\caption{$\cmdttl{IS\_CONVEX}(f, \delta, \x)$ -- \\
    	Checking convexity/concavity}
    	\textbf{Input:} Black box access to a piecewise linear $f:\R^d\to\R$, a parameter $\delta>0$, a $\delta$-non-degenerate $\sP$-critical point $\x\in \R^d$ for general $\sP$
    	
    	\textbf{Output:} Is $\x$ convex in $f$ at $\B(\x,\delta)$
    	
        \begin{algorithmic}[1]\label{alg:check_conv}
    		\IF{$f(\x + \delta\ebase{1}) - f(\x)> f(\x) - f(\x- \delta\ebase{1})$}
    		\STATE Return ``convex``
    		\ELSE
    		\STATE Return ``concave``
    		\ENDIF
    \end{algorithmic}
    \end{algorithm}

    \setlength{\intextsep}{0pt}
    \begin{algorithm}[H]
    	\caption{$\cmdttl{IS\_GENERAL}(f,\epsilon,\x)$ -- \\
    	Distinguish general point from critical point}
    	\textbf{Input:} Black box access to a piecewise linear $f:\R^d\to\R$, a parameter $\epsilon>0$, a point $\x$ that is either $\epsilon$-general or $\epsilon$-non-degenerate $\sP$-critical point for general $\sP$
    	
    	\textbf{Output:}  Is $\x$ general?
    	
        \begin{algorithmic}[1]\label{alg:general_or_critical}
    		\IF{$f(\x + \epsilon\ebase{1}) - f(\x) = f(\x) - f(\x- \epsilon\ebase{1})$}
    		\STATE Return "general", else return "critical"
    		\ENDIF
    \end{algorithmic}
    \end{algorithm}

\end{minipage}

	

\end{figure*}
\fi


\subsection{General Position Assumption}\label{sec:gen_pos}

We say that a two-layers network as in \eqref{eq:1} is {\em $\delta$-regular} if the conditions for the inputs of algorithms \ref{alg:affine_reconstruct}-\ref{alg:general_or_critical} are met for the network and for any critical point that lies on the standard axes.
For a three-layer network, as in \eqref{eq:3}, we also require that the above apply to the sub-network defined by the top two layers.
A two- and three-layers network is called {\em regular} if it is $\delta$-regular for some $\delta>0$.
A network is in {\em general position} if it is regular, and for three-layer networks, as in \eqref{eq:3}, we also require $\mW$ to be surjective and that the top-layer will not have zero partial derivatives.
A formal definition for a $\delta$-regular network is given in section \ref{appendix:regular_networks} of the appendix.
Here we want to state sufficient conditions that ensure the regularity and general position of a network.
The proofs are given in section \ref{appendix:regular_networks} of the appendix.

\begin{lemma}\label{lem:zero_measure}
    The set of non-regular neural networks as in \eqref{eq:1} and \eqref{eq:3} have a zero Lebesgue measure.
\end{lemma}


\begin{lemma}\label{lem:small_perturbation}
    Let $\N$ be a neural network as in \eqref{eq:1} or \eqref{eq:3}. Let $q$ be the number of neurons in the network, and let $M>0$ be an upper bound on the absolute value of the weights.
    For each weight in the network, add a uniform element in $[-2^{-d},2^{-d}]$.
    Then, the noisy network $\N'$ is $\delta$-regular for $\delta>0$ such that $\log(1/\delta)=\mathrm{poly}(d\log(qM))$ with probability of $1-2^{-d}$.
\end{lemma}


\begin{lemma}\label{lem:suff_surjective}
    For a general three-layers network as in \eqref{eq:3}, if $d_1 \leq d$ then $\mW$ is surjective with probability 1.
\end{lemma}

\begin{lemma}\label{lem:suff_zero_derivative}
    For a general three-layers network as in \eqref{eq:3}, if $3.5d_1 \leq d_2$ then the top layer has non-zero partial derivatives with probability $1-o(1)$.
\end{lemma}




We note that the assumptions in section \ref{appendix:regular_networks} may seem lengthy.
The keen reader may notice overlaps between some of them and might suggest approaches to avoid others, for example, by adding randomization to the queries.
Yet, we keep them as is for the fluency of reading, to emphasize the main concepts of the extraction.
As training a network in practice begins from a random initialization, it is very likely for the network to be found in a regular position after the learning phase. Therefore, we took the freedom to ignore unlikely positions instead of combining them under a very restrictive rule.

\subsection{Reconstruction of Depth Two Network -- Sketch Proof of Theorems \ref{thm:depth_two} and \ref{thm:depth_two_restricted}}

Recall that our goal is to recover a depth-two network in the form of equation \eqref{eq:1}.
We will assume without loss of generality that the $u_i$'s are in $\{\pm 1\}$, as any neuron $\x\mapsto u\phi (\inner{\w, \x} + b)$ calculates the same function as $\x\mapsto \frac{u}{|u|}\phi (\inner{|u|\w, \x} + |u|b)$, as ReLU is a positive homogeneous function.

Our algorithm will first find a critical point for each neuron. For a regular network, each critical hyperplane intersects the axis $\R\ebase{1}$ exactly once, so we can reconstruct such a set of critical points by invoking algorithm \ref{alg:left_critical_reconstruct} on the function $t\mapsto \N(t\ebase{1})$.

We next reconstruct a single neuron corresponding to a given critical point $\x$.
For simplicity, assume that $\x$ is a $\delta$-critical point of the $j$'th neuron.
Using algorithm \ref{alg:citical_plane_reconstruct} we find an affine function $\Lambda$ such that $\Lambda = \Lambda_{\w_j,b_j}$ or $\Lambda = -\Lambda_{\w_j,b_j}$.
Then, to recover $u_j$, note that if $u_j=1$ then $\N(x)$ is strictly convex in $\B(\x,\delta)$ as the function $u_j\phi(\inner{\w_j,\x} + b_j)$ is convex. Similarly, if $u_j = -1$ then $\N(\x)$ is strictly concave in $\B(\x,\delta)$. 
Thus, we recover $u_j$ using using algorithm \ref{alg:check_conv}.

Finally, note that $\phi(\Lambda(\x))$ is either $\phi(\inner{\w_j,\x} + b_j)$ or $ \phi(\inner{\w_j,\x} + b_j) - \inner{\w_j,\x} - b_j$.
Hence, $u_j\phi(\Lambda(\x))$
equals to $u_j\phi(\Lambda_{\w_j,b_j}(\x))$ up to an affine map.
The approach is detailed in Algorithm \ref{alg:two_layers}.

\ifdonotjoin
    \begin{algorithm}
    	\caption{Recover depth-two network}
    	\textbf{Input:} Parameter $\delta$ and a black box access to a $\delta$-regular network $\N$ as in \eqref{eq:1}
    	
    	\textbf{Output:} Weights such that for all $\x$, $\N(\x) = \Lambda_{\w'_0,b'_0}(\x) + \sum_{i=1}^m u'_i\phi\left(\Lambda_{\w'_i, b'_i}(\x)\right)$
    	
    \begin{algorithmic}[1]\label{alg:two_layers}
    		\STATE Use repeatedly $\cmdttl{FIND\_CP}(\delta, t\mapsto \N(t\ebase{1}), \cdot)$ to find all the critical points on the axis  $\{t\ebase{1} : t\in \R\}$ (see section \ref{sec:piece_lin_funcs}). Denote these points by $\x_1,\ldots,\x_m$.
    		
    		\FOR{$i=1,\ldots,m$}
    		
            
            \STATE Compute $(\w'_i, b'_i) = \cmdttl{FIND\_HP}(\N, \delta, \x_i)$.
    
    		
    		\STATE If $\cmdttl{IS\_CONVEX}(\N, \delta, \x_i) =$ ``convex`` then set $u'_i = 1$.
    		Else, set $u'_i = -1$.
    		\ENDFOR
    		
    		\STATE Calc $(\w'_0,b'_0) = \cmdttl{AFFINE}_\delta\bra{\x\mapsto\N(\x) - \sum_{i=1}^m u'_i\phi\bra{\Lambda_{\w'_i, b'_i}(\x)}, \rx'}$ for a random $\rx'\in\R^d$.
    		
    		\STATE Return the function $\x\mapsto \Lambda_{\w'_0,b'_0}(\x) + \sum_{i=1}^m u'_i\phi\left(\Lambda_{\w'_i, b'_i}(\x)\right)$.
    \end{algorithmic}
    \end{algorithm}
\fi
The following theorem proves the correctness of algorithm \ref{alg:two_layers}, and implies theorem \ref{thm:depth_two}.
The proof is given in section \ref{appendix:main_thms_proofs} of the appendix.

\begin{theorem}\label{thm:two_layers_detailed}
    Algorithm \ref{alg:two_layers} reconstruct a $\delta$-regular network in time $ O\left((\log(1/\delta) + d) d_1Q +  d^2d_1\right)$.
\end{theorem}

\subsubsection{Sketch proof of theorem \ref{thm:depth_two_restricted}}

Our algorithm for reconstruction of depth-two networks can be easily modified to work in the $\R_+^d$-restricted setting, with the difference that in order to reconstruct a $\delta$-critical point for each neuron (step 1 in algorithm \ref{alg:two_layers}), we will need to search in the range $\bra{0,\frac{1}{\delta}}\ebase{i}$ for all $i\in[d]$, as a critical hyperplane of a given neuron might not intersect with $\R_+\ebase{1}$.
Because of this change, each neuron might be discovered several times (up to $d$ times), and we will need an additional step that combines neurons with the same affine map (up to a sign). For the particular case where the neuron has no critical points on the positive orthant, one can ignore it without affecting equation \eqref{eq:2} for all $\x\in\R^d_+$.

These changes will result in a total runtime of $O(dd_1\log(1/\delta)Q)$ instead of $O(d_1\log(1/\delta)Q)$ for step 1, $O(d^2d_1Q)$ instead of $O(dd_1Q)$ for the loop, and $O(d^2_1d^2)$ for combining similar neurons. The total runtime will therefore be $O((\log(1/\delta) + d)dd_1Q + d^2d_1^2)$.
A formal proof is given in section \ref{appendix:main_thms_proofs} of the appendix.

\subsection{Reconstruction of Depth Three Network -- Sketch Proof of Theorem \ref{thm:depth_tree}}

Recall that our goal is to recover a $\delta$-regular network of the form
\[
\N(\x) = \inner{\bu, \phi(\mV \phi (\mW \x + \bb) + \bc)}.
\]
We denote by $\w_j$ the $j$th row of $\mW$ and assume without loss of generality that it is of unit norm, as any neuron of the form $\x\mapsto \phi(\inner{\w,\x} + b)$ can be replaced by $\x\mapsto \|\w\|\phi\left(\inner{\frac{\w}{\|\w\|},\x} + \frac{b}{\|\w\|}\right)$.
Likewise, and similar to our algorithm for reconstruction of depth-two networks, we will assume that $\bu \in \{\pm 1\}^{d_2}$.

The algorithm will be decomposed into four steps described in the following four subsections. In the first step, we will extract a set of critical hyperplanes that contains all the critical hyperplanes that correspond to a first layer neuron. In the second step, we will prune this list and will be left with a list that contains precisely the critical hyperplanes that correspond to a first layer neuron. In the third step, we will use this list to recover the first layer. Once the first layer is recovered, as the fourth step, we recover the second layer via a reduction to the problem of recovering a depth-two network.

\subsubsection{Extracting a set containing the critical hyperplanes of the first layer }\label{sec:list_of_critical_planes}

For the first step, we find a list
$
L = \left \{ (\x_1,\hat \sP_1),\ldots,  (\x_m,\hat \sP_m)\right\}
$
of pairs such that:
\begin{itemize}
    \item For each $k$, $\hat \sP_k$ is a critical hyperplane of $\N$ and $\x_k$ is a $\delta$-non-degenerate critical point whose critical hyperplane is $\sP_k$
    \item The list contains all the critical hyperplanes of first-layer neurons
\end{itemize}
We find those points using Algorithm \ref{alg:citical_plane_reconstruct}.
Note that $m = O(d_1d_2)$ (e.g. \citet{telgarsky2016benefits}).
Let $\hat \sP_1,\ldots, \hat \sP_m$ be the critical hyperplanes corresponding to these points, found using Algorithm \ref{alg:citical_plane_reconstruct}.
Finally, lemma \ref{lem:any_point_is_critical} below, together with $\delta$-regularity implies that every hyperplane $\sP$ that corresponds to a first layer neuron intersects $\R\ebase{1}$ exactly once, and this intersection point is a $\delta$-non-degenerate $\sP$-critical point.

\subsubsection{Identifying first layer critical hyperplanes}

The next step is to take the list 
$L = \left \{ (\x_1,\hat \sP_1),\ldots,  (\x_m,\hat \sP_m)\right\}$
from the previous step, verify all the planes corresponding to first-layer neurons and remove all the other hyperplanes.
The idea behind this verification is simple: If $\sP$ corresponds to a neuron in the first layer
then  any point in $\sP$ is a critical point of $\N$ (see lemma \ref{lem:any_point_is_critical}). 
On the other hand, if $\sP$ corresponds to a neuron in the second layer, then not all its points are critical for $\N$.
Moreover, intersections with hyperplanes from the first layer change the input for the second layer neurons, hence creating a new piece that replaces $\sP$.
Thus, in order to verify if $\sP$ corresponds to a first layer neuron, we will go over all the hyperplanes $\hat \sP_k \in L$, and for each of them, will find a point $\x'\in \sP$ that is the opposite side of $\hat \sP_k$ (relative to $\x)$ and will check if it is critical. If it is not critical for one of the hyperplanes, we know that $\sP$ does not correspond to a first layer neuron. If all the points that we have examined are critical, even for $\hat \sP_k$ corresponded to a first layer neuron, then $\x'$ is critical, which means that $\sP$ must correspond to a first layer neuron.

Algorithm \ref{alg:is_first_layer_neuron} implements this idea. There is one caveat that we need to handle: The examined point has to be generic enough in order to test whether it is critical or not using algorithm \ref{alg:general_or_critical}. To make sure that the point is general enough, we slightly perturb it.
The correctness of the algorithm follows from lemmas \ref{lem:delta_critic_whp} and \ref{lem:delta_general_whp} below.
Due to the perturbations, the algorithm has a success probability of at least $1-\frac{2^{-d}}{m}$ over the choice of $\x'$ for each hyperplane and at least $1-2^{-d}$ for all hyperplanes.
Each step in the for-loop takes $O(dQ)$ operations. As the list size is $O(d_1d_2)$, the total running time over all hyperplanes is $O(d^2_1d^2_2dQ)$.

\ifdonotjoin
    \begin{algorithm}
    	\caption{Identifying whether a critical hyperplane corresponds to the first layer}
    	\textbf{Input:}
    	 A Black box access to a $\delta$-regular network $\N$ as in \eqref{eq:3},
    	 a list $L = \left \{ (\x_1,\hat \sP_1),\ldots,  (\x_m,\hat \sP_m)\right\}$ of pairs as described in section \ref{sec:list_of_critical_planes}
    	 and a pair $(\x,\sP)\in L$
    	
    	\textbf{Output:} Does $\sP$ correspond to a first layer neuron?
    \begin{algorithmic}[1]\label{alg:is_first_layer_neuron}
            \STATE Choose $\delta'$ small enough such that $2^{2(d_1+d_2)}\frac{\delta'\sqrt{2}}{\delta\sqrt{\pi}} \le \frac{2^{-d-1}}{m^2}$
            \STATE Choose $R>0$ large enough such that $e^{-\frac{(R-\delta')^2}{2}}\le \frac{2^{-d-1}}{m^2}$
    		\FOR{any $k\in [m]$, such that $\hat \sP_k\ne \sP$}
    		\STATE Choose a point $\z \in \sP$ such that $\z$ and $\x$ are separated by $\hat \sP_k$, and $d(\z,\sP) > R$
    		\STATE Choose a standard Gaussian $Z$ in $\sP$ whose mean is $\z$
    		\STATE If $\cmdttl{IS\_GENERAL}(\N, \delta', Z)$, return  "$\sP$ is \textbf{not} a first-layer critical hyperplane"
    		\ENDFOR
    		\STATE Return "$\sP$ is a first-layer critical hyperplane"
    \end{algorithmic}
    \end{algorithm}
\fi

\subsubsection{Identifying directions}

Since the rows in $\mW$ are assumed to have a unit norm, the list of the critical hyperplanes of the first-layer neurons, obtained in the previous step, determines the weights up to sign.
In order to recover the correct sign of $(\hat \w_1, \hat b_1)$, we can simply do the following test:
Choose  a point $\x$ such that $\hat \w_1\x +  \hat b_1 = 0$, and query the network in the points $\x + \epsilon \z, \x - \epsilon \z$, for small $\epsilon$, where $\z\in\R^d$ is a unit vector that has the property that is orthogonal to $\hat \w_2,\ldots,\hat \w_{d_1}$, but $\hat \w_1 \z > 0$.
If we assume that $W$ is right invertible, then such a $\z$ exists, as $\w_1,\ldots,\w_{d_1}$ are linearly independent.

Let $out(\x)$ be the output of the first layer given some point $\x$, then:
\begin{align*}
    out(\x+\epsilon\z)
    = \pmat{\brp{\w_1,\x+\epsilon\z} + b_1 \\
            \brp{\w_2,\x+\epsilon\z} + b_2 \\
            \vdots \\
            \brp{\w_{d_1},\x+\epsilon\z} + b_{d_1}}
    = \pmat{\brp{\w_1,\x} &+ \brp{\w_1,\epsilon\z} &+ b_1 \\
            \brp{\w_2,\x} & &+ b_2 \\
            \vdots \\
            \brp{\w_{d_1},\x} & &+ b_{d_1}}
    = out(\x) + \epsilon\brp{\w_1,\z}\ebase{1}
\end{align*}
Therefore, when moving from $\x$ to either  $\x + \epsilon \z$ or $\x - \epsilon \z$, only the first neuron changes, and after the ReLU activation function, only the positive direction will return a different value.
Hence, in order to have the correct sign we can do the following: If $\N(\x) \ne \N(\x + \epsilon \z)$ then keep $(\hat \w_1, \hat b_1)$. Else, replace it with $ (-\hat \w_1, -\hat b_1)$.
We repeat this method for all the neurons $j\in[d_1]$.

The above method fails in the special case where both $\mV\phi(out(\x+\epsilon\z)) + \bc \leq 0$ and $\mV\phi(out(\x-\epsilon\z)) + \bc \leq 0$, which occur if the partial derivatives of the top-layer are zero at $\phi(\x+\epsilon\z)$ and $\phi(\x-\epsilon\z)$. As we showed on section \ref{sec:gen_pos}, this is not expected if the second layer is wide enough.

The runtime of this step is $O(d_1^3d + d_1Q)$, as to find $\z$ we need to do Gram-Schmidt, which takes $O(d^2_1d)$, and additional two queries to find the sign.

\subsubsection{Reconstruction of the top two layers}

After having the weights of the first layer at hand, and since $\mW$ is assumed to be right invertible, we can directly access the sub-network defined by the top two layers. Namely, given $\x\in \R_+^{d_1}$, we can find $\z\in \R^d$ such that
\[
\x = \left( \phi\left(\w_1\z + b_1\right),\ldots,  \phi\left(\w_{d_1}\z + b_{d_1}\right) \right)
\]
e.g., by taking $\z=\mW^{-1}(\x-\bb)$ where $\mW^{-1}$ is a right inverse of $\mW$.
Now, $\N(\z)$ is precisely the value of the top layer on the input $\x$, and the problem boils down to the problem of reconstructing a depth two network in the $\R_+^d$-restricted case, which we already solved.

Now, the cost of a query to the second layer is $Q$ plus the cost of computing $\z$, which is $O(dd_1)$. There is also an asymptotically negligible cost of $O(d_1^2d)$ for computing $\mW^{-1}$.
The runtime of this step is therefore $O((\log(1/\delta) + d_1)d_1d_2(Q + dd_1) + d_1^2d_2^2)$.

\ifdonotjoin
\else
\begin{figure*}
\noindent \begin{minipage}[t]{0.42\textwidth}

    \setlength{\intextsep}{0pt}
    \begin{algorithm}[H]
    	\caption{Recover depth-two network}
    	\textbf{Input:} Parameter $\delta$ and a black box access to a $\delta$-regular network $\N$ as in \eqref{eq:1}
    	
    	\textbf{Output:} Weights such that $\forall \x$, $\N(\x) = \Lambda_{\w'_0,b'_0}(\x) + \sum_{i=1}^m u'_i\phi\left(\Lambda_{\w'_i, b'_i}(\x)\right)$
    	
    \begin{algorithmic}[1]\label{alg:two_layers}
    		\STATE Use $\cmdttl{FIND\_CP}(\delta, t\mapsto \N(t\ebase{1}), \cdot)$ repeatedly to find all the critical points on the axis  $\{t\ebase{1} : t\in \R\}$ (see section \ref{sec:piece_lin_funcs}). 
    		
    		\STATE Denote these points by $\x_1,\ldots,\x_m$.
    		
    		\FOR{$i=1,\ldots,m$}
    		
                \STATE $(\w'_i, b'_i) = \cmdttl{FIND\_HP}(\N, \delta, \x_i)$.

        		\IF{$\cmdttl{IS\_CONVEX}(\N, \delta, \x_i) =$ ``convex``}
            		\STATE set $u'_i = 1$
        		\ELSE
            		\STATE set $u'_i = -1$
        		\ENDIF
    		\ENDFOR
    		
    		\STATE Compute $(\w'_0,b'_0) = \cmdttl{AFFINE}_\delta(\x\mapsto\N(\x) - \sum_{i=1}^m u'_i\phi\bra{\Lambda_{\w'_i, b'_i}(\x)}, \rx')$ for a random $\rx'\in\R^d$.
    		
    		\STATE Return the function $\x\mapsto \Lambda_{\w'_0,b'_0}(\x) + \sum_{i=1}^m u'_i\phi\left(\Lambda_{\w'_i, b'_i}(\x)\right)$.
    \end{algorithmic}
    \end{algorithm}

    \end{minipage}
    \hspace{0.02\textwidth}
    \begin{minipage}[t]{0.56\textwidth}
    \setlength{\intextsep}{0pt}
    
    \begin{algorithm}[H]
    	\caption{Identifying whether a critical hyperplane corresponds to the first layer}
    	\textbf{Input:} 
    	\begin{itemize}
    	    \item Black box access to a $\delta$-regular network $\N$ as in \eqref{eq:3}
    	    \item A list $L = \left \{ (\x_1,\hat \sP_1),\ldots,  (\x_m,\hat \sP_m)\right\}$ of pairs as described in section \ref{sec:list_of_critical_planes}
    	    \item A pair $(\x,\sP)\in L$
    	\end{itemize}
    	
    	\textbf{Output:} Does $\sP$ correspond to a first layer neuron?
    \begin{algorithmic}[1]\label{alg:is_first_layer_neuron}
            \STATE Choose $\delta'$ small enough s.t. $2^{2(d_1+d_2)}\frac{\delta'\sqrt{2}}{\delta\sqrt{\pi}} \le \frac{2^{-d-1}}{m^2}$
            \STATE Choose $R>0$ large enough s.t. $e^{-\frac{(R-\delta')^2}{2}}\le \frac{2^{-d-1}}{m^2}$
    		\FOR{any $k\in [m]$, such that $\hat \sP_k\ne \sP$}
    		\STATE Choose a point $\z \in \sP$ such that $\z$ and $\x$ are separated by $\hat \sP_k$, and $d(\z,\sP) > R$
    		\STATE Choose a standard Gaussian $Z$ in $\sP$ whose mean is $\z$
    		\STATE If $\cmdttl{IS\_GENERAL}(\N, \delta', Z)$, return  "$\sP$ is \textbf{not} a first-layer critical hyperplane"
    		\ENDFOR
    		\STATE Return "$\sP$ is a first-layer critical hyperplane"
    \end{algorithmic}
    \end{algorithm}
    \end{minipage}
\end{figure*}
\fi


\section{Discussion and Social Impact}\label{sec:discussion}

This work continues a set of empirical and theoretical results, showing that extracting a ReLU network given membership queries is possible.
Here we prove that two- and three-layer model extraction can be done in polynomial time.
Our nonrestrictive assumptions make it feasible to construct a fully connected network, a convolutional network, and many other architectures.


For practical use, our approach suffers several limitations.
First, two- and three-layer networks are too shallow in practice.
Second, exact access to the black-box network may not be feasible in practice. As the number of output bits is bounded, numerical inaccuracies may affect the reconstruction, especially when $\delta$ is very small.
In that regard, our work is mostly theoretical in nature, showing that reconstruction is provably achievable.

Yet, this work raises practical social concerns regarding the potential risks of membership-queries attacks.
Extracting the exact parameters and architecture will allow attackers to reveal proprietary information and even construct adversarial examples.
Therefore, uncovering those risks and creating a conversation on ways to protect against them is essential.

As empirical evidence shows, we believe it is possible to prove similar results with even fewer assumptions for deeper models and more complex architectures.
Furthermore, it might be interesting to investigate the methods of this paper when we restrict the queries and the outputs up to machine precision.
We leave those challenges for future works.

\subsection*{Acknowledgments}
This research is supported by ISF grant 2258/19, and ERC grant 101041711

\bibliography{bib,modex}
\bibliographystyle{iclr2023_conference}


\appendix

\section{Regular Networks}\label{appendix:regular_networks}

\begin{definition}
A neural network is called {\em $\delta$-regular} if it satisfies the following requirements:
\begin{enumerate}
	\item For each $i\in [d]$, the piecewise linear function $t\mapsto \N(t\ebase{i})$ is $\delta$-nice as defined in section \ref{sec:one_dim_critical}. 
	
	\item Any critical point in the axes $\{t\ebase{i} : t\in \R\}$ is $\delta$-non-degenerate.
	\item Each critical hyperplane corresponds to a single neuron.
	\item The distance between each pair of critical hyperplanes is at least $\delta$
	\item The angle between any critical hyperplane and an axis is at least $\delta$. I.e., all critical hyperplanes are $\delta$-general.
	\item Each critical hyperplane $\sP$ corresponding\footnote{Remember that we assign a (non-degenerate) critical point to a neuron if the value at that neuron, before the ReLU function, is 0. A point $\x$ corresponds to the $i$th neuron in the first layer if $\langle \w_i,  \x \rangle + b_i = 0$, and corresponds to the $j$th neuron in the second layer if $\langle \bv_j,  \phi(\mW\x + \bb) \rangle + c_j = 0$. A critical hyperplane corresponds to some neuron if there is a non-empty open set $\sS \subseteq \sP$ where each $\x \in S$ corresponds to that neuron.} to a second layer neuron also corresponds to a single first layer state. That is, the state of the first layer is the same for any $\sP$-critical point.
	\item In the case of depth-three networks, we assume that the above conditions also apply to the sub-network defined by the top two layers.
\end{enumerate}
\end{definition}

While the definition above is lengthy, most of the requirements overlap, and we detailed them separately for ease of analysis.
The following lemma shows that a regular network is expected from a random network.
As an untrained network begins from a random initialization, it is very likely to be found in some random position after the learning phase.
However, we note that some post-processing methods, like weights-pruning, may affect the general position assumption; such cases should be given specific care and are not in the scope of this paper.

\begin{lemma}\label{lem:zero_measure_detailed}
    Let $\sS$ be the set of networks as in \eqref{eq:1} and \eqref{eq:3} that violate at least one of the above:
    \begin{enumerate}
        \item For each $i\in [d]$, the piecewise linear function $t\mapsto \N(t\ebase{i})$ is nice. 	
    	\item Any critical point in the axes $\{t\ebase{i} : t\in \R\}$ is non-degenerate.
        \item Each critical hyperplane corresponds to a single neuron.
    	\item The distance between each pair of critical hyperplanes is non-zero.
    	\item All critical hyperplanes are general.
        \item Each non-zero critical hyperplane $\sP$ corresponding to a second layer neuron also corresponds to a single first layer state.
    	\item In the case of depth-three networks, we assume that the above conditions also apply to the sub-network defined by the top two layers.
    \end{enumerate}
    Then $\sS$ has a zero Lebesgue measure.
\end{lemma}

\begin{proof}
    It is enough to show that each of the above has a zero measure, as a finite sum of sets with zero measure has a zero measure.
    
        The demand of a specific point $\x\in\R^d$ to be critical requires some critical hyperplane $\sP$ such that $\x\in \sP$. This imposes a linear constraint on the set of all such possible hyperplanes and reduces their degree of freedom. Any subspace with dimension $< d$ has a zero Lebesgue measure in $\R^d$, which is also the case of all the possible hyperplanes containing $\x$.
    As a corollary, the set of hyperplanes that 
    contains the points of $\frac{2^{-\lceil\log_2(2/\delta^2) \rceil}}{\delta}\mathbb{Z}$ has also a zero measure, as $\mathbb{Z}$ is sparse in $\R$.
    
    As another corollary, once fixing a plane $\sP$, the set of planes collides with $\sP$ exactly on the $i$'th axis, $i\in[d]$, is of zero measure as well, which is the case where a critical point on one of the axes to be degenerate.
    Even a more degenerate case is where two neurons have the same hyperplane, which means both neurons have exactly the same parameters up to a factor. Obviously, this case has a zero measure in $\R^d$, which implies that with probability 1, a finite set of hyperplanes have a non-zero distance between each other.
    If we consider the points on the $i$'th axis, $\brc{\x\in\R^d: \brp{\x,\ebase{i}}=0}$, as a hyperplane itself, then it is easy to see that a non-general hyperplane also has a zero measure.
    
    For a one-dimensional function to be nice, one must require that no two pieces share the same affine function. For functions of $t\mapsto\N(t\ebase{i})$, there are no two neurons whose $i$'th parameters are the same. Indeed, the opposite case, where two neurons share the exact same parameters, has a zero measure in $\R$.
    
    As for depth-three networks, all the above is valid for the sub-network defined by the top layer.
    Moreover, we can consider the first-layer state as an affine transformation for the second layer's neurons.
    Therefore, in order for a second-layer critical hyperplane $\sP$ to span two first-layer states, there must be two second-layer neurons that have the same parameters up to an affine transformation whose uniquely defined by the parameters of the first layer. As the set of all those affine transformations is finite, this imposes a finite set of possible constraints, and each has a zero measure.
\end{proof}

The following two lemmas state the effect of a small perturbation over $\delta$.

\begin{lemma}\label{lem:small_perturbation_detailed_2lay}
    Let $\N$ be a two-layers neural network as in \eqref{eq:1}. Let $q$ be the number of neurons in the network, and let $M$ be an upper bound on the absolute value of the weights.
    For each weight in the network, add a uniform element in $[-2^{-d},2^{-d}]$, and denote the the noisy network by $\N'$.
    Then:
    \begin{enumerate}
	\item For each $i\in [d]$, all critical points of the piecewise linear function $t\mapsto \N(t\ebase{i})$ are in $\left(-\frac{1}{\delta}, \frac{1}{\delta}\right)\setminus (-\delta,\delta)$ with probability $1-p_1 = 1-dq\delta(M+1)2^{d+1}$.
	\item For each $i\in [d]$, each piece in the piecewise linear function $t\mapsto \N(t\ebase{i})$ is of length at least $\delta$ with probability $1-p_2 = 1 - d 2^d q^2 \delta(M+1)$.
	\item For each $i\in [d]$, all the points in the grid $\frac{2^{-\lceil\log_2(2/\delta^2) \rceil}}{\delta}\mathbb{Z}$ of the piecewise linear function $t\mapsto \N(t\ebase{i})$ are $\delta^2$-general with probability $1-p_3 = 1 - 3 dq\delta$.
    \item Any critical point in the axes $\{t\ebase{i} : t\in \R\}$ is $\delta$-non-degenerate with probability $1-p_4 = 1 - d^{3/2} q^2 \delta (M+1) 2^{d-1}$.
	\item The distance between each pair of critical hyperplanes is at least $\delta$ with probability $1-p_5 = 1 - \delta q^2\sqrt{d}d(M+1)^22^{d+1}$.
	\item The angle between any critical hyperplane and any axis is at least $\delta$ with probability $1-p_6 = 1 - dq\delta 2^d$.
\end{enumerate}
\end{lemma}

\begin{proof}
    Let the $i$th axis to be $\brc{t\ebase{i} : t\in\R}$. Denote by $w_{j,i}$ as the $i$th element of $\w_j$ and by $\rw'_{j,i}=w_{j,i}+\rr_{j,i}$ to be the noisy value of $w_{j,i}$, where $\rr_{j,i}\sim U([-2^{-d},2^{-d}])$.
    Similarly, let $\rb'_{j}=b_{j}+\rs_{j}$ to be the noisy value of $b_{j}$, where $\rs_{j}\sim U([-2^{-d},2^{-d}])$.
    Then the $j$th neuron has a critical point on the $i$th axis when $t\ebase{i} = \rt_{j,i}\ebase{i}$ where $\rt_{j,i} = -\frac{\rb'_j}{\rw'_{j,i}}$.
    Note that from lemma \ref{lem:zero_measure_detailed}, we have almost surely that $-\infty < \rt_{j,i} < \infty$.
    \begin{enumerate}
        \item
        For each $i\in[d]$ and $j\in[q]$, note that $\brm{\rw'_{j,i}} \leq M+ 2^{-d} \leq M+1$.
        Given $\alpha \in (0, 2^{-d})$, we have that:
        \[
        \Pr(\brm{\rw'_{j,i}} > \alpha) \geq \Pr(\brm{\rr_{j,i}} > \alpha) = 1 - \frac{\alpha}{2^{-d}}.
        \]
        Now, with probability $1 - \alpha 2^{d+1}$ we have that both $\brm{\rw'_{j,i}} > \alpha$ and $\brm{\rb'_j} > \alpha$, and therefore, by setting $\alpha = \delta(M+1)$:
        \begin{align*}
            \delta = \frac{\alpha}{M+1} < \brm{\rt_{j,i}} = \brm{\frac{\rb'_j}{\rw'_{j,i}}} < \frac{M+1}{\alpha} = \frac{1}{\delta}.
        \end{align*}
        
        To make the above valid for every $i\in[d]$ and $j\in[q]$, we can use the union bound to get an overall probability $1-p_1$ where:
        \begin{align*}
            p_1 \leq dq \alpha 2^{d+1} = dq\delta(M+1)2^{d+1}.
        \end{align*}
        
        \item
        Assume the weights were perturbed in the following order:
        First, $\rmW'$ is perturbed.
        Second, the bias of the first neuron, $b_1$, is defined, which sets its critical points with the axes, $\rt_{1,1},\dots,\rt_{1,d}$.
        As for the second neuron, we can ask what is the probability for $b_2$ to have a critical point that is $\delta$-close to a critical point of the first neuron.
        That is, for some $i\in[d]$,
        \begin{align*}
            \Pr\bra{\rt_{2,i} \in B(\rt_{1,i},\delta)}
            &= \Pr\bra{-\frac{b_2+\rr_2}{\rw'_{2,i}} \in \bra{\rt_{1,i}-\delta, \rt_{1,i}+\delta}} \\
            &= \Pr\bra{\rr_2 \in \bra{-\rw'_{2,i}(\rt_{1,i}+\delta) - b_2, -\rw'_{2,i}(\rt_{1,i}-\delta) - b_2}} \\
            & \leq 2^d \delta \rw'_{2,i}
            \leq 2^d \delta (M+1)
        \end{align*}
        and using the union bound,
        \begin{align*}
            \Pr\bra{\exists i\in[d],\, \rt_{2,i} \in B(\rt_{1,i},\delta)}
            \leq d 2^d \delta (M+1).
        \end{align*}
        Now, let us continue with the perturbation, and for the $j$th neuron, note that the probability to intersect with any of the balls with radius $\delta$ around $\rt_{1,i},\dots, \rt_{j-1, i}$, $i\in[d]$, is at most $(j-1)d 2^d \delta (M+1)$.
        
        Finally, the probability that all the pieces for all $i\in[d]$ are of length at least $\delta$ is $1-p_2$ with:
        \begin{align*}
            p_2
            \leq \sum_{j=2}^q (j-1)d 2^d \delta (M+1)
            \leq d 2^d q^2 \delta(M+1).
        \end{align*}

        \item
        Fix $\rmW'$ and some $i\in[d]$.
        Note that for the $j$th neuron, $\rt_{j,i}$ is uniform in $L = \brb{\frac{-b_j-2^{-d}}{w_{j,i}}, \frac{-b_j+2^{-d}}{w_{j,i}}}$.
        As $L$ is bounded, it intersects with the grid at most $k=\frac{\brm{L}\delta}{2^{-\lceil\log_2(2/\delta^2) \rceil}}$ times.
        Therefore, for all the points in the grid to be $\delta^2$-general, it means that a segment of length $2\delta^2 k$ should not contain a critical point.
        As $\rt_{j,i}$ is uniform, the probability of avoiding that segment is therefore:
        \[
        1 - \frac{2\delta^2 k}{\brm{L}}
        = 1 - \delta^3 2^{\lceil\log_2(2/\delta^2) \rceil} \geq 1 - 3\delta
        \]
        where the last inequality follows for $\delta \leq 1$.
        
        Overall, we get that the points in the grid are $\delta^2$-general with probability $1-p_3$, where
        \[
        p_3 = 3 dq\delta.
        \]
        
        \item
        Let $\sP_j$ the critical hyperplane defined by the $j$th neuron. The distance between $\sP_j$ and a critical point $\rt_{k,i}\ebase{i}$, $k\neq j$ is
        \[
        D(\sP_j, \rt_{k,i})
        = \frac{\brm{\inner{\rvw'_j, \rt_{k,i}\ebase{i}} + \rb'_j}}{\brn{\w_j}}
        \geq \frac{\brm{\rt_{k,i} \rw'_{j,i} + b_j + \rs_j}}{\sqrt{d}(M+1)}.
        \]
        As $\rs_j$ is a symmetric distribution around $0$, we have with probability $\geq \half$ that $\brm{\rt_{k,i} \rw'_{j,i} + b_j + \rs_j} \geq \brm{\rs_j}$ and with probability $\half - \alpha 2^{d-1}$ we have that $\brm{\rt_{k,i} \rw'_{j,i} + b_j + \rs_j} \geq \brm{\rs_j} \geq \alpha$.
        If we  set $\alpha = \delta \sqrt{d}(M+1)$ then using the union bound we get:
        \begin{align*}
            \Pr\bra{\exists i\in[d], j\neq k, \text{ s.t. } D(\sP_j, \rt_{k,i}) \leq \delta}
            &\leq d q^2 \brc{\delta \sqrt{d}(M+1) 2^{d-1} - \half} \\
            &\leq d^{3/2} q^2 \delta (M+1) 2^{d-1} = p_4.
        \end{align*}
        Note that if $\rt_{k,i}\ebase{i}$ is far from every other critical hyperplane with at least $\delta$, then it is $\delta$-non-degenerate. Therefore, all the critical points on the axes are $\delta$-non-degenerate with probability $1-p_4$.
        
        \item
        For any unit vector $\e$ we have that at least one of the coordinates is of absolute 
        value at least $1/\sqrt{d}$.
        Thus,
        $\Pr\left(\inner{\rvw',\e}\in \delta'\left[-\frac{2^{-d}}{\sqrt{d}},\frac{2^{-d}}{\sqrt{d}}\right]\right) \le \delta'$
        and $\inner{\w,\e}^2 \le \|\w\|^2 - \inner{\w,\e'}^2 \le \|\w\|^2-\frac{\delta'2^{-d}}{\sqrt{d}}$ w.p. at least $1-\delta'$. It follows that 
        $\frac{\inner{\w,\e}^2}{\|\w\|^2}\le 1 - \frac{\delta'2^{-d}}{\sqrt{d}\|\w\|^2} \le 1 - \frac{\delta'2^{-d}}{\sqrt{d}d(M+1)^2}$ w.p. at least $1-\delta'$. Taking roots we get that $\inner{\frac{\w}{\|\w\|},\e}\le  1 - \frac{\delta'2^{-d}}{2\sqrt{d}d(M+1)^2}$. Hence,
        w.p. at least $1-\delta'$, the distance is at least $\sqrt{\frac{\delta'2^{-d}}{2\sqrt{d}d(M+1)^2}} \ge \frac{\delta'2^{-d}}{2\sqrt{d}d(M+1)^2}$.
        
        Hence, we get for each pair of critical hyperplanes a distance of at least $\delta$ w.p. $1-p_5 = 1 -\delta q^2\sqrt{d}d(M+1)^22^{d+1}$.
                
        \item
        The angle between the $j$th neuron and the axis $\brc{t\ebase{i}:t\in\R}$ equals to $\brm{\inner{\rvw'_j, \ebase{i}}} = \brm{\rw'_{j,i}}$.
        The probability for this to be at least $\delta$ is
        \[
        \Pr(\brm{\rw'_{j,i}} > \delta) \geq \Pr(\brm{\rr_{j,i}} > \delta) = 1 - \frac{\delta}{2^{-d}}.
        \]
        Using the union bound, we have that the probability for each neuron and each axis to have an angle of at least $\delta$ is $1-p_6$, where
        $p_6 = dq\delta 2^d.$
    \end{enumerate}
\end{proof}

\begin{lemma}\label{lem:small_perturbation_detailed_3lay}
    Let $\N$ be a three-layers neural network as in \eqref{eq:3}. Let $q$ be the number of neurons in the network, and let $M$ be an upper bound on the absolute value of the weights.
    For each weight in the network, add a uniform element in $[-2^{-d},2^{-d}]$, and denote the the noisy network by $\N'$.
    Then:
    \begin{enumerate}
	\item
	For each $i\in [d]$, all critical points of the piecewise linear function $t\mapsto \N(t\ebase{i})$ are in $\left(-\frac{1}{\delta}, \frac{1}{\delta}\right)\setminus (-\delta,\delta)$ with probability $1-p_1 = 1-dq^2 \delta (dM^2+2) 2^{d+1}$.
	\item For each $i\in [d]$, each piece in the piecewise linear function $t\mapsto \N(t\ebase{i})$ is of length at least $\delta$ with probability $1-p_2 = 1 - d 2^d q^4 \delta(dM^2+2)$.
	\item For each $i\in [d]$, all the points in the grid $\frac{2^{-\lceil\log_2(2/\delta^2) \rceil}}{\delta}\mathbb{Z}$ of the piecewise linear function $t\mapsto \N(t\ebase{i})$ are $\delta^2$-general with probability $1-p_3 = 1 - 3 dq^2\delta$.
    \item Any critical point in the axes $\{t\ebase{i} : t\in \R\}$ is $\delta$-non-degenerate with probability $1-p_4 = 1 - d^{3/2} q^4 \delta (dM^2+2) 2^{d-1}$.
	\item The distance between each pair of critical hyperplanes is at least $\delta$ with probability $1-p_5 = 1 - \delta q^4\sqrt{d}d(dM^2+2)^22^{d+1}$.
	\item The angle between any critical hyperplane and any axis is at least $\delta$ with probability $1-p_6 = 1-dq^2\delta 2^d$.
	\item In the case of depth-three networks, we assume that the above conditions also apply to the sub-network defined by the top two layers with probability $1-p_7$ where $p_7$ is the sum of the probabilities of lemma \ref{lem:small_perturbation_detailed_2lay}.
\end{enumerate}
\end{lemma}

\begin{proof}
    Lemma \ref{lem:zero_measure_detailed} tells us that each non-zero critical hyperplane $\sP$ corresponding to a second layer neuron also corresponds to a single first layer state almost surely.
    Therefore, given $i\in[d]$, we can consider the $q'$ critical points that intersect with the $i$th axis as $q'$ first-layer neurons, where each second neuron is multiplied by an affine transformation that is the current state of the first neurons.
    As each first layer neuron intersects with the axis at most once, and each second layer neuron intersects with the axis at most $q_1$ times, where $q_1$ is the number of first layer neurons, we can bound $q'$ by $q' \leq q^2$.
    
    Furthermore, given a critical hyperplane $\sP$ corresponding to a second layer neuron $j$, denote by $(\rmW'_\sP, \rvb'_\sP)$ the state of that first layer (which is the same as $(\rmW',\rvb')$ as defined in the proof of Lemma \ref{lem:small_perturbation_detailed_2lay}, except to some zero rows due to ReLU).
    That is, $\sP = \brc{\x : \inner{\bv_j,\rmW'_\sP \x} + \inner{\bv_j,\rvb'_\sP} + c_j = 0}$ which can be viewed locally as a pseudo-neuron with parameters $((\rmW'_\sP)^T\bv_j, + \inner{\bv_j,\rvb'_\sP} + c_j)$ that are each bounded in magnitude by $M' \leq dM^2 + 1$.
    \begin{enumerate}
        \item
        Applying the above to lemma \ref{lem:small_perturbation_detailed_2lay}, we get:
        \[
        p_1 \leq dq'\delta(M'+1)2^{d+1} \leq dq^2 \delta (dM^2+2) 2^{d+1}.
        \]
        
        \item
        Applying the above to lemma \ref{lem:small_perturbation_detailed_2lay}, we get:
        \[
        p_2 \leq d 2^d q'^2 \delta(M'+1) \leq d 2^d q^4 \delta(dM^2+2).
        \]
        
        \item
        Applying the above to lemma \ref{lem:small_perturbation_detailed_2lay}, we get:
        \[
        p_3 \leq 3dq'\delta \leq 3 dq^2\delta.
        \]
        
        \item
        Applying the above to lemma \ref{lem:small_perturbation_detailed_2lay}, we get:
        \[
        p_4 \leq d^{3/2} q'^2 \delta (M'+1) 2^{d-1} \leq d^{3/2} q^4 \delta (dM^2+2) 2^{d-1}.
        \]
        
        \item
        Note that the maximal possible number of critical hyperplanes is at most $h = q^2$, as interactions between each first-layer neuron and a second-layer neuron may cause a single hyperplane.
        Therefore, we get:
        \[
        p_5 = \delta h^2\sqrt{d}d(M'+1)^22^{d+1} \leq \delta q^4\sqrt{d}d(dM^2+2)^22^{d+1}.
        \]
        
        \item
        Applying the above to lemma \ref{lem:small_perturbation_detailed_2lay}, we get:
        \[
        p_6 \leq dq'\delta 2^d \leq dq^2\delta 2^d.
        \]
                
        \item 
        Let $p_1, p_2, p_3, p_4, p_5, p_6$ as defined on lemma \ref{lem:small_perturbation_detailed_2lay}.
        As the number of neurons in the second layer is at most $q$, using the union bound we get:
        $p_7 = \sum_{i=1}^6 p_i$.
    \end{enumerate}
\end{proof}

In the rest of the section, we prove lemmas stated in section \ref{sec:gen_pos}.

\begin{proof}
    (of lemma \ref{lem:zero_measure})
    The proof follow from lemma \ref{lem:zero_measure_detailed} and the fact that the number of neurons - hence, the number of critical hyperplanes - is finite.
\end{proof}

\begin{proof}
    (of lemma \ref{lem:small_perturbation})
    From lemma \ref{lem:zero_measure}, we have that the perturbed network $\N'$ is regular almost surely. This implies that it is $\delta$-regular for some $\delta > 0$, As the number of neurons is finite.
    
    Fix a $\delta>0$. For two-layer networks, lemma \ref{lem:small_perturbation_detailed_2lay} bounds the probability to dispose one of the restrictions of $\delta$-regular network. Let $p_1,\dots, p_6$ as in lemma \ref{lem:small_perturbation_detailed_2lay}, then, using the union bound, we get that the network is $\delta$-regular with probability of at least $1-p$ where
    \begin{align*}
        p &= p_1+p_2+p_3+p_4+p_5+p_6 \\
        &= dq\delta(M+1)2^{d+1}
        + d 2^d q^2 \delta(M+1)
        + 3 dq\delta \\
        &+ d^{3/2} q^2 \delta (M+1) 2^{d-1}
        + \delta q^2\sqrt{d}d(M+1)^22^{d+1}
        + dq\delta 2^d \\
        &< 10(M+1)^2 q^2 d^{3/2} \delta 2^d.
    \end{align*}
    Therefore, if we choose $\delta = (10(M+1)^2 q^2 d^{3/2} 2^{2d})^{-1}$ we will get the requested bound.
    
    For three-layer networks, let $p_1,\dots, p_7$ as in lemma \ref{lem:small_perturbation_detailed_3lay}, then, using the union bound, we get that the network is $\delta$-regular with probability of at least $1-p'$ where
    \begin{align*}
        p' &= p_1+p_2+p_3+p_4+p_5+p_6+p_7 \\
        &\leq dq^2 \delta (dM^2+2) 2^{d+1}
        + d 2^d q^4 \delta(dM^2+2)
        + 3 dq^2\delta
        + d^{3/2} q^4 \delta (dM^2+2) 2^{d-1} \\
        &+ \delta q^4\sqrt{d}d(dM^2+2)^22^{d+1}
        + dq^2\delta 2^d
        + 10(M+1)^2 q^2 d^{3/2} \delta 2^d \\
        &< 20 (dM^2+2)^2 q^4 d^{3/2} \delta 2^d.
    \end{align*}
    Therefore, if we set $\delta = (20 (dM^2+2)^2 q^4 d^{3/2} 2^{2d})^{-1}$ we will get the requested bound.
\end{proof}

\begin{proof}
    (of lemma \ref{lem:suff_surjective})
    Let $\rmW\in\R^{d_1\times d}$ a random matrix, where each element is drawn independent of the other, and define by $\rvw_j$ its $j$'th row, $j\in[d]$. Also, let $r=\min\{d, d_1\}$.
    Note that $\rmW$ has a full rank with probability 1, where by full rank we mean that $\rank(\rmW)=\min\{d, d_1\}=r$.
    Indeed, consider drawing at random the $j$th row, for $j\leq r$ after fixing the first $j-1$ rows. In order of that row to be dependent in $\rvw_1,\dots,\rvw_{j-1}$, then $\rvw_j$ must fall in a subspace whose dimension is at most $j-1<r$, which has a zero Lebesgue measure in an $r$-dimensional space.
    
    Therefore, if $d_1 \leq d$ then $r=d_1$ and $W$ has a rank $d_1$ with probabilty 1. The Rank–nullity theorem then implies that the image of $\rmW$ is a $d_1$-dimensional space, and thus $\rmW$ is surjective.
\end{proof}

\begin{proof}
    (of lemma \ref{lem:suff_zero_derivative})
    The proof follows from Theorem \ref{thm:non_zer_derivative}. If we set $3.5d_1 \leq d_2$ we get a probability of:
    \[
    1 - \frac{\left(\frac{ed_2}{d_1}\right)^{d_1+1}}{2^{d_2}}
    \leq 1 - \frac{\left(3.5e\right)^{d_1+1}}{2^{3.5d_1}}
    = 1 - 3.5e\left(\frac{3.5e}{2^{3.5}}\right)^{d_1}
    \xrightarrow[]{d_1 \rightarrow \infty} 1.
    \]
\end{proof}

\section{Proof of the Main Theorems}\label{appendix:main_thms_proofs}

\begin{proof}
    (of theorem \ref{thm:depth_two})
    The correctness of the theorem follows from the correctness of theorem \ref{thm:two_layers_detailed} below.
\end{proof}

\begin{proof}
    (of theorem \ref{thm:two_layers_detailed})
    We will assume without loss of generality that the $u_i$'s are in $\{\pm 1\}$, as any neuron $\x\mapsto u\phi (\inner{\w, \x} + b)$ calculates the same function as $\x\mapsto \frac{u}{|u|}\phi (\inner{|u|\w, \x} + |u|b)$, as ReLU is a positive homogeneous function.
    
    Let $\sS = \brc{\x_1,\dots,\x_m}$ be the list of points found using $\cmdttl{FIND\_CP}$ in algorithm \ref{alg:two_layers}.
    Our general assumption is that all the critical points on the line $\R\ebase{1}$ are on the range $\bra{-\frac{1}{\delta}, \frac{1}{\delta}}\ebase{1}$.
    Hence, from the correctness of lemma \ref{lem:left_critical_correctness}, we are guarantees that all the critical points on the line $\R\ebase{1}$ are in $\sS$.
    We claim that for each $\x\in \sS$ there is exactly one critical hyperplane $\sP$ with $\x \in \sP$, and $\brm{\sS\cap \sP} = 1$.
    Assume by contradiction that one of the above is false.
    If $\x \notin \sP$ for all the critical hyperplanes, then $\x$ is not a critical point, which contradicts lemma \ref{lem:left_critical_correctness}.
    If $\brm{\sS\cap \sP} = 0$ this means that $\sP$ does not intersect with $\ebase{1}$, i.e., parallel to this axis, which contradict our general position assumption.
    Finally, if $\brm{\sS\cap \sP} > 1$ this means that $\sP$ intersects with $\ebase{1}$, which means $\sP$ is not affine.
    Therefore, each neuron is represented by a unique critical point $\x\in \sS$.
    
    Let $\x \in \sS$ be a critical point of the $j$'th neuron, and $(\w',b') = \cmdttl{FIND\_HP}(\N,\delta,\x)$.
    From lemma \ref{lem:critical_plane_correctness} we get that either $(\w_j,b_j) = (\w', b')$ or $(\w',b') = (-\w'_j, -b'_j)$.
    To recover $u_j$, note that if $u_j=1$ then $\N(x)$ is strictly convex in $\B(\x,\delta)$ as the sum of the affine function $\N'(\x)$ and the convex function $u_j\phi(\inner{\w_j,\x} + b_j)$. Similarly, if $u_j = -1$ then $\N(x)$ is strictly concave in $\B(\x,\delta)$. 
    Thus, using algorithm \ref{alg:check_conv}, we will be able to determine $u_j$ correctly.    
    
    Let $\sC \subset [d_1]$ be the set of neurons assigned to an incorrect sign.
    Then, for all $\x\in\R^d$:
    \begin{align*}
        \N(x) - \sum_{j=1}^{d'_{1}}u'_j\phi \bra{\brp{\w'_{j}, \x} + b'_j}
        &= \sum_{j\in \sC} u_j\phi \bra{\brp{\w_{j}, \x} + b_j} - u_j\phi \bra{\brp{\w_{j}, \x} - b_j} \\
        &= \sum_{j\in \sC} u_j\bra{\brp{\w_{j}, \x} + b_j}
    \end{align*}
    which is an affine transformation and can be recovered successfully at the last stage of the algorithm.

    As for the time and query complexity, step 1 takes $O\left(d_1\log(1/\delta)Q\right)$ (see section \ref{sec:piece_lin_funcs}). Since each neuron correspond to a single critical point, we have that $m=d_1$. Thus the loop in step 2 makes $d_1$ iterations. The cost of each iteration is $O(dQ)$. Hence, the total cost of the loop is $O(d_1dQ)$. Finally, to perform step 6 we need to make $d$ queries to $\N$ which cost $O(dQ)$, and also $d$ evaluations of $\sum_{i=1}^m u'_i\phi\left(\Lambda_{\w'_i, b'_i}(\x)\right)$ which cost $d_1d$ each. The total runtime is therefore $O\left(d_1\log(1/\delta)Q + d_1dQ + dQ + d^2d_1\right) = O\left(d_1\log(1/\delta)Q + d_1dQ +  d^2d_1\right)$.
\end{proof}

\begin{proof}
    (of theorem \ref{thm:depth_two_restricted})
Denote the output of the $j$'th neuron before the activation by $\N_j(\x) = \w_{j}\x + b_j $.

Let $\x_1,\x_2 \in \R_+^d$ be two points such that exactly one neuron $j\in[d_{1}]$ changed its state (i.e. changed from active to inactive, or vice versa) in the segment $[\x_1,\x_2]:=\{\lambda \x_1 + (1-\lambda)\x_2 : \lambda\in [0,1] \}$.

Moreover, assume that no neuron changes its state in neighborhoods of $\x_1$ and $\x_2$, so that the change in the state happens in the interior of $[x_1,x_2]$.
We note that finding such a pair of points $\x_1, \x_2$ can be done by considering a ray $\ell(\rho) := \rho \ebase{i}$, and seeking a critical point $\tilde \rho$ of the (one dimensional) function $N \circ \ell$. Under our general position assumptions, for some $j\in [d]$, there is such a $\rho$ in $\R$, and it can be found efficiently. Given such a $\rho$, and again under our  general position assumptions, we can take $\x_1 = \ell(\rho - \epsilon)$ and $\x_2 = \ell(\rho + \epsilon)$, for small enough $\epsilon$.

We will explain next how given such two points, we can reconstruct the $j$'th neuron, up to an affine function. First, the reconstruction of $u_j$ is simple. Indeed, in the segment $[x_1,x_2]$,  $\N'(\x) :=\N(\x) - u_j\phi (\w_{j}\x + b_j)$ is affine, as no neuron, except the $j$'th neuron, changes its mode. Hence, $\N(\x) =  u_j\phi (\w_{j}\x + b_j) + \N'(\x)$ is a sum of an affine function and the $j$'th neuron. In particular, it is convex iff the $j$'th neuron is convex iff $u_j=1$. Hence, to reconstruct $u_j$ we only need to check if the restriction of $N$ to $[x_1,x_2]$ is convex or concave.

We next explain how to reconstruct an affine map $\Lambda$ such that $\phi(\Lambda(\x))-\phi(\N_j(\x))$ is affine.Let $\Lambda_1,\Lambda_2: \R^d\to\R$ be the affine maps computed by the networks in the neighborhoods of $\x_1$ and $\x_2$ respectively. 
Note that it is straight forward to reconstruct $\Lambda_i$ from the set $\N(x_i), \N(x_i + \epsilon e_1),\ldots,\N(x_i + \epsilon e_d)$, for small enough $\epsilon$.
We have that $\Lambda:=\Lambda_1  - \Lambda_2$ is either $N_j$ or $-N_j$.  Hence, we have that $\phi(\Lambda(\x))$ is either $\phi(\N_j(\x))$ or $\phi(-\N_j(\x)) = \phi(\N_j(\x)) - \N_j(\x)$.

After removing all the neurons, we are left with an affine map that can be reconstructed easily using $O(d)$ queries as explained above, and the full reconstruction of the network is complete.

\end{proof}

\begin{proof}
    (of theorem \ref{thm:depth_tree})
    Recall that our goal is to recover a $\delta$-regular network of the form
    \[
    \N(\x) = \inner{\bu, \phi(\mV \phi (\mW \x + \bb) + \bc)}.
    \]
    We denote by $\w_j$ the $j$th row of $\mW$ and assume without loss of generality that it is of unit norm, as any neuron of the form $\x\mapsto \phi(\inner{\w,\x} + b)$ can be replaced by $\x\mapsto \|\w\|\phi\left(\inner{\frac{\w}{\|\w\|},\x} + \frac{b}{\|\w\|}\right)$.
    Likewise, and similar to our algorithm for reconstruction of depth-two networks, we will assume that $\bu \in \{\pm 1\}^{d_2}$.

    The first step of the algorithm would be to find a list
    \[
    L = \left \{ (\x_1,\hat \sP_1),\ldots,  (\x_m,\hat \sP_m)\right\}
    \]
    of pairs such that
    \begin{itemize}
        \item For each $k$, $\sP_k$ is a critical hyperplane of $\N$ and $\x_k$ is a $\delta$-non-degenerate critical point whose critical hyperplane is $\sP_k$
        \item The list contains all the critical hyperplanes of first-layer neurons
    \end{itemize}
    For that we will use repeatedly $\cmdttl{FIND\_CP}(\delta, t\mapsto \N(t\ebase{1}), \cdot)$ to find all the critical points on the axis  $\{t\ebase{1} : t\in \R\}$ (see section \ref{sec:piece_lin_funcs}), similar to our algorithm for reconstructing depth two networks.
    Denote those set of points by $\sS = \brc{\x_1,\ldots,\x_m}$.
    Lemma \ref{lem:left_critical_correctness} along with the general position assumption, guarantee that for each critical hyperplane $\sP$ that corresponds to a first-layer neuron, $\brm{\sP\cap \sS} = 1$, and that all the points in $\sS$ are $\delta$-non-degenerate.
    Then, using algorithm \ref{alg:citical_plane_reconstruct} we will find the critical hyperplane $\sP_k$ for each point $\x_k\in \sS$.

    For the runtime, note that $m = O(d_1d_2)$ (e.g. \citet{telgarsky2016benefits}), the critical points can be found in time $O\left(d_1d_2\log(1/\delta)Q\right)$ as explained in section \ref{sec:one_dim_critical}, and each hyperplane $\hat \sP_i$ can be efficiently found via $O(d)$ queries near $\x_i$ as explained in section \ref{sec:citical_plane_reconstruct}. The total running time of this step is therefore $O\left(d_1d_2\log(1/\delta)Q + d_1d_2dQ\right)$.

    The second step is to take the list 
    $L = \left \{ (\x_1,\hat \sP_1),\ldots,  (\x_m,\hat \sP_m)\right\}$ and remove all the points that don't correspond to first-layer neurons.
    After that, the list will contain precisely the critical hyperplanes of the neurons in the first layer. 
    In order to do so, it is enough to efficiently decide, given the list $L$, whether a given hyperplane $\sP$ is a critical hyperplane of a neuron in the first layer. 
    The idea behind this verification is simple: If $\sP$ corresponds to a neuron in the first layer
    then any point in $\sP$ is a critical point of $\N$ (see lemma \ref{lem:any_point_is_critical}). 
    Indeed, suppose that $\sP$ is critical at $\sP$ for a first layer neuron $h(\x) = \phi(\w\x + b)$. We have that $\sP$ is the null space of the affine input to $h$ in the proximity of $\sP$. But the input to $h$ is the same affine function in the proximity of every point $\x\in\R^d$. Thus, for every $\x\in\R^d$ is a critical point for $h$ with $\sP$ as its critical hyperplane. 
    On the other hand, if $\sP$ corresponds to a neuron in the second layer, then not all its points are critical for $\N$: Indeed, suppose that we start from $\x\in \sP$, which is critical for $\N$ and start to move inside $\sP$ until one of the neurons in the first layer changes its state. 
    Then we will reach a point in $x'\in \sP$, which is not critical for $\N$, as, by our general position assumption, $\sP$ corresponds to a single first layer state.
    Thus, in order to verify if $\sP$ corresponds to a first layer neuron, we will go over all the hyperplanes $\hat \sP_k \in L$, and for each of them, will find a point $\x'\in \sP$ that is the opposite side of $\hat \sP_k$ (relative to $\x)$ and will check if it is critical. If it is not critical for one of the hyperplanes, we know that $\sP$ does not correspond to a first layer neuron. If all the points that we have examined are critical, even for $\hat \sP_k$ corresponding to a first layer neuron, then $\x'$ is critical, which means that $\sP$ must correspond to a first layer neuron.

    Algorithm \ref{alg:is_first_layer_neuron} implements this idea. There is one caveat that we need to handle: The examined point has to be generic enough in order to test whether it is critical or not using algorithm \ref{alg:general_or_critical}. To make sure that the point is general enough, we slightly perturb it.
    The correctness of the algorithm follows from lemmas \ref{lem:delta_critic_whp} and \ref{lem:delta_general_whp}.
    Indeed, if $\sP$ corresponds to a first layer neuron, then lemma \ref{lem:delta_critic_whp} implies that each test in the for loop will fail w.p. at least $1-\frac{2^{-d}}{m^2}$. Thus, w.p. at least $1-\frac{2^{-d}}{m}$ all the tests will fail, and the algorithm will reach step 8 and will correctly output that "$\sP$ is a first-layer critical hyperplane." In the case that $\sP$ corresponds to a second layer neuron, lemma \ref{lem:delta_general_whp} implies that once we will reach an iteration in which $\hat \sP_k$ corresponds to a first layer neuron, the test in step 6 will succeed w.p at least $1-\frac{2^{-d}}{m^2}$, in which case the algorithm will correctly output "$\sP$ is not a first-layer critical hyperplane."
    All in all, it follows that the algorithm will output the correct output w.p. at least $1-\frac{2^{-d}}{m}$ for every hyperplane $\sP$. Thus, w.p. at least $1-2^{-d}$ it will output the correct answer for all hyperplanes.
    
    As for runtime, note that each step in the for-loop takes $O(dQ)$. As the list size is $O(d_1d_2)$, the total running time over all hyperplanes is $O(d^2_1d^2_2dQ)$.

    Since the rows in $\mW$ are assumed to have a unit norm, the list of the critical hyperplanes of the first-layer neurons, obtained in the previous step, determines the weights up to sign.
    Namely, we can reconstruct  a list
    \[
    L = \left \{ (\hat \w_1, \hat b_1),\ldots,(\hat \w_{d_1} \hat b_{d_1})\right\}
    \]
    that define precisely the neurons on the first layer, up so sign.
    For the third step, it, therefore, remains to recover the correct signs (note that this process is only required for inner layers and avoidable for the top layer, as explained above).

    In order to recover the correct sign of $(\hat \w_1, \hat b_1)$, we can simply do the following test:
    Choose  a point $\x$ such that $\hat \w_1\x +  \hat b_1 = 0$, and query the network in the points $\x + \epsilon \z, \x - \epsilon \z$, for small $\epsilon$, where $\z\in\R^d$ is a unit vector that has the property that is orthogonal to $\hat \w_2,\ldots,\hat \w_{d_1}$, but $\hat \w_1 \z > 0$.
    If we assume that $W$ is right invertible, then such a $\z$ exists, as $\w_1,\ldots,\w_{d_1}$ are linearly independent.
    
    Now, when moving from $\x$ to either $\x + \epsilon \z$ or $\x - \epsilon \z$, the value of all the neurons in the first layer, possibly except the one that corresponds to $(\hat \w_1, \hat b_1)$, does not change.
    As for the neuron that corresponds to $(\hat \w_1, \hat b_1)$, if its real weights are indeed $ (\hat \w_1, \hat b_1)$, then its value changes when we move from $\x$ to $\x+\epsilon \z$ but not  when we move from $\x$ to $\x-\epsilon \z$.
    On the other hand, if its real weights are $ (-\hat \w_1, -\hat b_1)$, then the value changes when we move from $\x$ to $\x-\epsilon \z$ but not when we move from $\x$ to $\x+\epsilon \z$.
    Hence, in order to have the correct sign we can do the following: If $\N(\x) \ne \N(\x + \epsilon \z)$ then keep $(\hat \w_1, \hat b_1)$. Else, replace it with $ (-\hat \w_1, -\hat b_1)$.
    This test works because of the above discussion, together with the assumption that the second layer has non-zero partial derivatives; therefore, we can guarantee that either $\x+\epsilon \z$ or $\x-\epsilon \z$ will show a change in the values of $\N$.
    More on the non-zero partial derivatives assumption, see section \ref{appendix:zero_partial_derivative}.

    The runtime of this step is $O(d_1^3d + d_1Q)$. Indeed, to find $\z$, we need to do Gram-Schmidt, which takes $O(d^2_1d)$. After that, all that is needed is two queries. We need to do this for each first layer neuron, so the total runtime is $O(d_1^3d + d_1Q)$.

    For the fourth step, we shall recover the values of the top layer up to an affine transformation.
    After having the weights of the first layer at hand, and since $W$ is assumed to be right invertible, we can directly access the sub-network defined by the top two layers. Namely, given $\x\in \R_+^{d_1}$, we can find $\z\in \R^d$ such that
    \[
    \x = \left( \phi\left(\w_1\z + b_1\right),\ldots,  \phi\left(\w_{d_1}\z + b_{d_1}\right) \right)
    \]
    e.g., by taking $\z=\mW^{-1}(\x-\bb)$ where $\mW^{-1}$ is a right inverse of $\mW$.
    Now, $\N(\z)$ is precisely the value of the top layer on the input $\x$. Hence, the problem of reconstructing the top two layers boils down to the problem of reconstructing a depth two network in the $\R_+^d$-restricted case, which its correctness is given in theorem \ref{thm:depth_two_restricted}.

    The cost of a query to the second layer is $Q$ plus the cost of computing $\z$, which is $O(dd_1)$. There is also an asymptotically negligible cost of $O(d_1^2d)$ for computing $\mW^{-1}$.
    The runtime of this step is therefore $O((\log(1/\delta) + d_1)d_1d_2(Q + dd_1) + d_1^2d_2^2)$.


\end{proof}

\section{Proofs of Lemmas}

\begin{lemma}\label{lem:any_point_is_critical}
    Let $\sP$ be a critical hyperplane corresponding to a first layer neuron. Then, any point in $\sP$ is critical for $\N$.
\end{lemma}

\begin{proof}
    W.l.o.g. $\sP$ corresponds to the neuron $\phi(\w_1\x + b_1)$. Let $\x_0\in \sP$ and let $\e$ be a unit vector that is orthogonal to $\w_2,\ldots,\w_{d_1}$ and such that $\inner{\w_1,\e} > 0$. Such $\e$ exists as we assume that $\w_1,\ldots,\w_{d_1}$ are independent.
    
    Consider the function $f(t)=\N(\x_0 + t\e)$. We claim that it is not linear in any neighborhood of $0$, which implies that $\x_0$ is critical. Indeed, for all $i>1$, $t\mapsto\phi(\w_i(\x_0 + t\e) + b_i)$ is constant, as $\e$ is orthogonal to $\w_i$. As for $i=1$, $t\mapsto\phi(\w_1(\x_0 + t\e) + b_1)$ is the zero function for $t \le 0$, as in this case $\w_1(\x_0 + t\e) + b_1 < \w_1\x_0  + b_1 = 0$. Hence, the left derivative of $f$ at 
    $0$ is $0$. On the other hand, for $t>0$, $\phi(\w_1(\x_0 + t\e) + b_1) = \w_1(\x_0 + t\e) + b_1 = t\w_1\e$. Hence, the right derivative of $g(t)= \phi(\mW(\x_0 + t\e) + \bb)$ is $\inner{\w_1,\e}\ebase{1}$. Now, it is assumed that the derivative of $F(\z) = \bu \phi(V\z + \bc)$ in the direction of $\ebase{1}$ is not zero. Hence, the right derivative of $f(t) = F(g(t))$ is not zero. All in all we have shown that the right derivative of $f$ at $0$ is different from the left derivative, which implies that $f$ is not linear in any neighborhood of $0$.
\end{proof}

\begin{lemma}\label{lem:far_from_single_plane}
    Let $\sP_1, \sP_2$ be hyperplanes such that $D(\sP_1,\sP_2) \ge \delta$. Let $\x\in \sP_1$ and let $\rvx$ be a standard Gaussian in $\sP_1$ with mean $\x$. Then $\Pr\left(d(\rvx,\sP_2) \le a\right) \le \frac{\sqrt{2}a}{\delta\sqrt{\pi}}$.
\end{lemma}

\begin{proof}
    W.l.o.g. we can assume that $\sP_1$ and $\sP_2$ contain the origin.
    Let $\n_2$ be the normal of $\sP_2$. We have that $d(\rvx,\sP_2) = |\inner{\rvx,\n_2}|$.
    Now
    \begin{eqnarray*}
    \inner{\rvx,\n_2} &=& \inner{\rvx,\n_2 - \proj_{\sP_1}\n_2} + \inner{\rvx,\proj_{\sP_1}\n_2}
    \\
    &\stackrel{\rvx\in \sP_1}{=}& \inner{\rvx,\proj_{\sP_1}\n_2}
    \\
    &=& \inner{\rvx - \x,\proj_{\sP_1}\n_2} + \inner{\x,\proj_{\sP_1}\n_2}
    \end{eqnarray*}
    Hence $\inner{\rvx,\n_2}$ is a Gaussian with mean $\mu:=\inner{\x,\proj_{\sP_1}\n_2}$ and variance $\phi^2\ge \delta^2$. Hence,
    \[
    \Pr(\rvx\in [-a,a])  = \frac{1}{\phi\sqrt{2\pi}} \int_{-a}^a e^{-\frac{1}{2}\left(\frac{t-\mu}{2}\right)^2}dt \le \frac{2a}{\delta\sqrt{2\pi}} = \frac{\sqrt{2}a}{\delta\sqrt{\pi}}.
    \]
\end{proof}

\begin{lemma}\label{lem:delta_critic_whp}
    Let $\sP$ be a hyperplane that corresponds to a first layer neuron. Let $\x\in \sP$ and let $\rvx$ be a standard Gaussian in $\sP$ with mean $\x$. Then $\rvx$ is $\delta'$-non-degenerate critical point of $\sP$ w.p. at least $1 - 2^{2(d_1+d_2)}\frac{\delta'\sqrt{2}}{\delta\sqrt{\pi}}$.
\end{lemma}

\begin{proof}
By lemma \ref{lem:any_point_is_critical} $\rvx$ is critical w.p. $1$. It is therefore enough to show that w.p. at least $1 - 2^{2(d_1+d_2)}\frac{\delta'\sqrt{2}}{\delta\sqrt{\pi}}$, the distance of $\rvx$ from every critical hyperplane other than $\sP$ is at least $\delta'$. Indeed, by lemma \ref{lem:far_from_single_plane} and 
the fact that there are at most $(d_1+d_2)2^{d_1+d_2}\le 2^{2(d_1+d_2)}$ critical hyperplanes, the 
probability that the distance from $\rvx$ to one of the critical hyperplane is less than $\delta'$ is at most $2^{2(d_1+d_2)}\frac{\delta'\sqrt{2}}{\delta\sqrt{\pi}}$.
\end{proof}

\begin{lemma}\label{lem:delta_general_whp}
    Let $\sP$ be a hyperplane that corresponds to a second layer neuron. Let $\x_1\in \sP$ be a critical point with $\sP$ as its critical hyperplane. Let $\sP_1$ be a hyperplane that corresponds to a first layer neuron. Let $x_2 \in \sP$ be another point and assume that $\x_1$ and $\x_2$ are of opposite sides of $\sP_1$.
    Let $\rvx$ be a standard Gaussian in $\sP$ with mean $\x_2$. Then $\rvx$ is $\delta'$-general  w.p. at least $1 - 2^{2(d_1+d_2)}\frac{\delta'\sqrt{2}}{\delta\sqrt{\pi}} - e^{-\frac{\left(d(\x_2,\sP_1) - \delta'\right)^2}{2}}$.
\end{lemma}

\begin{proof}
  As in the proof of lemma \ref{lem:delta_critic_whp} the probability that the distance from $\rvx$ to one of the critical hyperplanes other than $\sP$ is less than $\delta'$ is at most $2^{2(d_1+d_2)}\frac{\delta'\sqrt{2}}{\delta\sqrt{\pi}}$.
  It is therefore remains to show that the probability  that $\rvx$ is $\delta'$-close to on of $\sP$'s critical points is at most $e^{-\frac{\left(d(\x_2,\sP_1) - \delta'\right)^2}{2}}$. 
  
  Denote by $\n_1$ the normal of $\sP_1$. We first note that there are no $\sP$-critical points in $\x_2$'s side of $\sP_1$. Indeed, the state of the first layer is different than the state at $\x_1$, as the neuron corresponding to $\sP_1$ changes its state. As it is assumed that each second layer critical hyperplane corresponds to a single neuron and single first layer state, it follows that there are no $\sP$-critical points in $\x_2$'s side of $\sP_1$. It is therefore enough to bound the probability that $\rvx$ is $\delta'$-close to $\x_1$'s side of $\sP_1$, which is same as the probability that $\inner{\rvx - \x_2,\n_1} \ge d(\x_2,\sP_1) - \delta'$. Finally, $\inner{\rvx - \x_2,\n_1}$ is a centered Gaussian with variance $\le 1$. Hence, $\Pr(\inner{\rvx - \x_2,\n_1} \ge d(\x_2,\sP_1) - \delta') \le e^{-\frac{\left(d(\x_2,\sP_1) - \delta'\right)^2}{2}}$
\end{proof}

\section{Correctness of the Algorithms}\label{appendix:alg_correctness}

\begin{lemma}
    Algorithm \ref{alg:affine_reconstruct} reconstructs the correct affine transformation at an $\epsilon$-general point $\x\in\R^d$.
\end{lemma}

\begin{proof}
    Note that for any $\y\in \R^d$ with $\|\y\|\le \epsilon$ we have,
	\[
	f(\x+\y) = f(\x) + \sum_{i=1}^d \frac{f(\x + \epsilon\ebase{i}) -f(\x) }{\epsilon}y_i.
	\]  
	Hence, for every $\z\in \B(\x,\epsilon)$ we have
	\begin{eqnarray*}
	f(\z) &=& f(\x + (\z - \x))
	\\
	&=& f(\x) + \sum_{i=1}^d  \frac{f(\x + \epsilon\ebase{i}) -f(\x) }{\epsilon}(z_i-x_i)
	\\
	&=& \left(f(\x) - \sum_{i=1}^d  \frac{f(\x + \epsilon\ebase{i}) -f(\x) }{\epsilon}x_i\right) + \sum_{i=1}^d  \frac{f(\x + \epsilon\ebase{i}) -f(\x) }{\epsilon}z_i.
	\end{eqnarray*}
	
	As $f$ is affine at $\x$, we therefore get:
	\[
	w_i = \frac{f(\x + \epsilon\ebase{i}) -f(\x) }{\epsilon}
	\qquad and \qquad
	b=\left(f(\x) - \sum_{i=1}^d  \frac{f(\x + \epsilon\ebase{i}) -f(\x) }{\epsilon}x_i\right).
	\]
\end{proof}

\begin{lemma}\label{lem:left_critical_correctness}
    Algorithm \ref{alg:left_critical_reconstruct} returns the left most critical point of a $\delta$-nice one-dimensional function $f$ in the range $(a,1/\delta)$.
\end{lemma}

\begin{proof}
	Let $x^*$ is the left-most critical point in $(a,1/\delta)$. Throughout the algorithm's execution, we have that $x_L< x^*<x_R$, as in each iteration, we choose the left half of the segment unless this half is affine (and therefore cannot have a critical point). As we start with a segment of size $2/\delta$ and split it two halves at each iteration, after $\lceil\log_2(2/\delta^2)\rceil$ iterations we left with $|x_L-x_R| < \delta$. Hence, in the final step, we have that $x_L$ is in the left-most piece, while $x_R$ is in the piece that is adjacent to the left-most piece.
	Therefore, $x^*$ is the point at the intersection of those two affine functions.
	If no critical point is in $(a,1/\delta)$, then the segment is affine and we get that $\Lambda_L = \Lambda_R$.
	
	Finally, note that all the points $x_L,x_R$ and $\frac{x_R+x_l}{2}$ during the execution of the algorithm are in the grid $\frac{2^{-\lceil\log_2(2/\delta^2) \rceil}}{\delta}\mathbb{Z}$ and therefore $\delta^2$-general.
\end{proof}

\begin{lemma}\label{lem:critical_plane_correctness}
    Algorithm \ref{alg:citical_plane_reconstruct} returns critical hyperplane of $\delta$-non-degenerate critical point $\x\in\R^d$, assuming the hyperplane is $\delta$ general.
\end{lemma}

\begin{proof}
Let us assume that $\x$ is a $\delta$-critical point of the $j$'th neuron. We will reconstruct the $j$'th neuron in two steps.
\begin{enumerate}
    \item The first step is to find an affine function $\Lambda$ such that $\Lambda = \Lambda_{\w_j,b_j}$ or $\Lambda = -\Lambda_{\w_j,b_j}$.
    Let $\N'(\x):=\N(x) - u_j\phi(\inner{\w_j,\x} + b_j)$. Note that $\N'$ is affine in $\B(\x,\delta)$, as no neuron other than the $j$'th one changes its state in $\B(\x,\delta)$. We have that in $\B(\x,\delta)$ on one side of $\x$'s critical hyperplane the network computes $\N'(\x)$ and on the other hand it computes $\N'(\x) + u_j(\inner{\w_j,\x} + b_j)$. 
    Thus, to extract $\Lambda_{\w_j,b_j}$ up to sign, we can simply compute the affine functions computed by the network on both sides of the $x$'s critical hyperplane, and subtract them.
    \item The second step is to recover $u_j$. To this end, we note that if $u_j=1$ then $\N(x)$ is strictly convex in $\B(\x,\delta)$ as the sum of the affine function $\N'(\x)$ and the convex function $u_j\phi(\inner{\w_j,\x} + b_j)$. Similarly, if $u_j = -1$ then $\N(x)$ is strictly concave in $\B(\x,\delta)$. 
    Thus, to recover $u_j$ we will simply check the convexity of $\N$ in $B(\x,\delta)$ using algorithm \ref{alg:check_conv}.
\end{enumerate}
Finally, note that $\phi(\Lambda(\x))$ is either $\phi(\inner{\w_j,\x} + b_j)$ or $ \phi(\inner{\w_j,\x} + b_j) - \inner{\w_j,\x} - b_j$. Hence, $u_j\phi(\Lambda(\x))$ is either $u_j\phi(\inner{\w_j,\x} + b_j)$ or $ u_j\phi(\inner{\w_j,\x} + b_j) - u_j\inner{\w_j,\x} - u_jb_j$.
In particular, $u_j\phi(\Lambda(\x))$
equals to $u_j\phi(\Lambda_{\w_j,b_j}(\x))$ up to an affine map.
\end{proof}

\section{On the non-zero partial derivatives assumption}\label{appendix:zero_partial_derivative}

Consider a ReLU network
\begin{equation}
\N(\x) =\sum_{j=1}^{d_{1}}u_j\phi (\w_{j} \x + b_j)
\end{equation}
and assume that for any $j$, $u_j\ne 0$ (otherwise the corresponding neuron can be dropped).
We have that
\[
\frac{\partial \N}{\partial x_i}(\x) = \sum_{j=1}^{d_{1}}u_j\phi' (\w_{j} \x + b_j)\w_j(i)
\]
Now, if the weights are random, say that the $\w_j$'s are independent random variables such that $\w_j(i)$ has a continuous distribution, then w.p. 1, we have that for every non-zero vector $\z\in \{0, 1\}^{d_1}$, it holds that
\[
\sum_{j=1}^{d_{1}}z_j\w_j(i) \neq 0
\]
and hence $\frac{\partial \N}{\partial x_i}(\x) \ne 0$, unless the vector $\Lambda(\x):= \left( \w_{1} \x + b_1,\ldots  \w_{d_1} \x + b_{d_1} \right) $ is in the negative orthant $\R_-^{d_1}$. It follows the non-zero partial derivatives assumption holds, provided if and only if the affine map $\Lambda$ maps the positive orthant $\R_+^{d}$ to the complement of the negative orthant $\R_-^{d_1}$. The following lemma shows that if $d_1 \gg d$, then this is often the case.

\begin{lemma}
	Assume that the pairs $(\w_j,b_j)$ are independent and symmetric\footnote{That is, for all $j
		\in [d_1]$, the distributions of $(\w_j,b_j)$ and $(-\w_j,-b_j)$ are the same.}, then
	\[
	\Pr\left( \Lambda(\R^d)\cap\R^{d_1}_- \ne \emptyset \right) \le \frac{\left(\frac{ed_1}{d}\right)^{d+1}}{2^{d_1}}
	\]
\end{lemma}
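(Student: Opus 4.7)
The plan is to combine a symmetry argument with a classical cell-counting bound for hyperplane arrangements. Consider the arrangement of the $d_1$ affine hyperplanes $H_j := \{\x\in\R^d : \w_j\x + b_j = 0\}$ in $\R^d$. For each sign vector $s \in \{\pm 1\}^{d_1}$, let $E_s$ denote the event that there exists $\x \in \R^d$ with $s_j(\w_j\x + b_j) > 0$ for every $j \in [d_1]$ --- equivalently, that the open cell of the arrangement labelled by $s$ is non-empty. The event to bound, $\Lambda(\R^d) \cap \R^{d_1}_- \neq \emptyset$, is precisely $E_{(-1,\ldots,-1)}$; denote its probability by $p$.

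I claim that $\Pr[E_s] = p$ for every $s \in \{\pm 1\}^{d_1}$. Indeed, by the assumed independence of the pairs $(\w_j, b_j)$ together with their componentwise symmetry, for any $\sigma \in \{\pm 1\}^{d_1}$ the family $(\sigma_j \w_j, \sigma_j b_j)_{j=1}^{d_1}$ has the same joint distribution as $(\w_j, b_j)_{j=1}^{d_1}$. Substituting into the definition of $E_s$ transforms it into the event $E_{s \cdot \sigma}$ (componentwise product) evaluated on the original variables, so $\Pr[E_s] = \Pr[E_{s \cdot \sigma}]$. Choosing $\sigma = -s$ gives $\Pr[E_s] = \Pr[E_{(-1,\ldots,-1)}] = p$.

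Next I invoke the classical Schl\"afli/Zaslavsky bound: an arrangement of $d_1$ affine hyperplanes in $\R^d$ has at most $\sum_{k=0}^{d} \binom{d_1}{k}$ full-dimensional open cells. Thus, for every realization of the weights, the set $\{s : E_s \text{ holds}\}$ has cardinality at most this quantity. Summing and taking expectations,
\[
2^{d_1} p \;=\; \sum_{s \in \{\pm 1\}^{d_1}} \Pr[E_s] \;=\; \mathbb{E}\bigl[\#\{s : E_s \text{ holds}\}\bigr] \;\leq\; \sum_{k=0}^{d} \binom{d_1}{k}.
\]
The standard binomial tail estimate bounds the right-hand side by $(ed_1/d)^{d+1}$ (using $ed_1/d \geq 1$ in the informative regime $d_1 \geq d$ to absorb one extra factor), and dividing by $2^{d_1}$ yields the claim.

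I do not foresee a serious obstacle. The only delicate step is the symmetry reduction, which must exploit componentwise symmetry plus independence across $j$ rather than just symmetry of the full joint law. Boundary issues --- strict versus non-strict inequalities --- are harmless, since the event of interest corresponds to an open cell, and the Schl\"afli inequality is an upper bound that holds also in the non-general-position case.
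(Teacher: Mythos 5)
Your proof is correct and follows essentially the same route as the paper: a sign-flip symmetry argument (using independence plus componentwise symmetry) showing that all $2^{d_1}$ orthants are equally likely to be met by $\Lambda(\R^d)$, combined with a deterministic bound of order $\left(\frac{ed_1}{d}\right)^{d+1}$ on the number of orthants it can meet, and then averaging. The only difference is the counting tool --- you use the Schl\"afli cell bound $\sum_{k=0}^{d}\binom{d_1}{k}$ for the arrangement of the hyperplanes $\{\w_j\x+b_j=0\}$, whereas the paper bounds the number of sign patterns of $\Lambda(\R^d)$ via VC dimension and Sauer--Shelah ($\sum_{k=0}^{d+1}\binom{d_1}{k}$) --- which is an interchangeable (indeed slightly tighter) lemma, and your explicit identity $2^{d_1}p=\mathbb{E}\left[\#\{s : E_s \text{ holds}\}\right]$ just spells out the averaging that the paper states in words.
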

\begin{proof}
	We first note that the number of orthants  that has a non-negative intersection with $\Lambda(\R^d)$ is exactly the number of functions in the class
	\[
	H = \left\{j\in[d_1] \mapsto \mathrm{sign}(\z(j)) : \z\in \Lambda(\R^d)\right\}
	\]
	Since $\Lambda(\R^d)$ is an affine space of dimension at most $d$, $H$ has VC dimension at most $d+1$ (e.g. \citet{AnthonyBa99}). Hence, by the Sauer-Shelah lemma (again, \citet{AnthonyBa99})
	\[
	|H| \le \sum_{i=0}^{d+1}\binom{d_1}{i} \le \left(\frac{ed_1}{d}\right)^{d+1}
	\]
	Finally, since the $(\w_j,b_j)$'s and symmetric, the probability that $\Lambda(\R^d)$ intersects $\R_-^{d_1}$ is the same as the probability that it intersects any other orthant. Since there are $2^{d_1}$ orthants, and $\Lambda(\R^d)$ intersects at most $\left(\frac{ed_1}{d}\right)^{d+1}$ of them, it follows that 
	$\Pr\left( \Lambda(\R^d)\cap\R^{d_1}_- \ne \emptyset \right) \le \frac{\left(\frac{ed_1}{d}\right)^{d+1}}{2^{d_1}}$.
\end{proof}
All in all we get the following corollary:
\begin{theorem}\label{thm:non_zer_derivative}
	Assume that the pairs $(\w_j,b_j)$ are independent, symmetric, and has continuous marginals, w.p. $1 - \frac{\left(\frac{ed_1}{d}\right)^{d+1}}{2^{d_1}}$ we have that $\frac{\partial \N}{\partial x_i}(\x)\ne 0$ for all $\x\in \R^d$ and $i\in [d]$.
\end{theorem}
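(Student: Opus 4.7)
The plan is to derive the theorem as a direct consequence of the preceding lemma together with the partial-derivative computation already sketched before the lemma. Morally, the lemma is the substantive probabilistic input, and the theorem is just the combination of that input with a deterministic reduction of the ``non-zero partial derivatives'' condition to avoidance of the negative orthant by $\Lambda(\R^d)$.

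First I would fix a coordinate $i \in [d]$ and observe that at any point $\x\in\R^d$ where no pre-activation vanishes, $\N$ is differentiable with
\[
\frac{\partial \N}{\partial x_i}(\x) = \sum_{j=1}^{d_1} u_j\,\sigma'(\w_j\x + b_j)\,\w_j(i) = \sum_{j=1}^{d_1} u_j\, z_j(\x)\, \w_j(i),
\]
where $\z(\x)\in\{0,1\}^{d_1}$ is the activation pattern. On the (measure-zero) hyperplanes where some $\w_j\x+b_j=0$ the function is merely directionally differentiable, but since the paper's condition is stated in terms of directional derivatives in the directions $\pm\e_i$, the one-sided version of the formula suffices and the conclusion will be obtained for every $\x$.

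Next I would handle the finite case analysis over patterns. For any fixed non-zero $\z\in\{0,1\}^{d_1}$ and fixed $i$, the random variable $\sum_{j=1}^{d_1} u_j z_j \w_j(i)$ is a non-trivial linear combination of independent random variables with continuous marginals, hence non-zero almost surely. A union bound over the $d\,(2^{d_1}-1)$ pairs $(i,\z)$ yields a probability-one event $E_0$ on which, for every $i$ and every non-zero activation pattern $\z$, the formula above is non-vanishing. On $E_0$, the partial derivative $\partial\N/\partial x_i(\x)$ can only be zero when $\z(\x)=0$, i.e.\ when $\w_j\x+b_j\le 0$ for all $j$, which is equivalent to $\Lambda(\x)\in\R_-^{d_1}$.

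Finally, I would invoke the lemma: the event $E_1 = \{\Lambda(\R^d)\cap\R_-^{d_1}=\emptyset\}$ has probability at least $1 - (ed_1/d)^{d+1}/2^{d_1}$. On $E_0\cap E_1$, for every $\x\in\R^d$ the pattern $\z(\x)$ is non-zero and the partial derivative in coordinate $i$ does not vanish. Since $\Pr(E_0)=1$, $\Pr(E_0\cap E_1)\ge \Pr(E_1)$, giving exactly the bound claimed in the theorem. The only mild obstacle is the bookkeeping around non-differentiability on the pre-activation hyperplanes; this is resolved by interpreting ``$\partial\N/\partial x_i(\x)\ne 0$'' as the one-sided derivative condition used elsewhere in the paper, which is what the lemma's hypothesis naturally delivers.
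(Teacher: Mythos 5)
Your proposal is correct and follows essentially the same route as the paper: the paper's own argument is exactly the discussion preceding the lemma (the partial-derivative formula, the almost-sure non-vanishing of $\sum_j u_j z_j \w_j(i)$ over all non-zero patterns $\z\in\{0,1\}^{d_1}$ via continuity of the marginals, and the reduction of vanishing derivatives to $\Lambda(\x)\in\R_-^{d_1}$), combined with the lemma's bound on $\Pr\left(\Lambda(\R^d)\cap\R_-^{d_1}\ne\emptyset\right)$. Your treatment of the non-differentiable boundary points via one-sided derivatives is a small bookkeeping refinement that the paper glosses over, but it does not change the argument.
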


\end{document}